\title[Blessing of Dimensionality for Approximating Sobolev Classes on Manifolds]{Blessing of Dimensionality for Approximating Sobolev Classes on Manifolds}
\def\eqref#1{equation~\ref{#1}}
\def\1{\bm{1}}
\DeclareMathAlphabet{\mathsfit}{\encodingdefault}{\sfdefault}{m}{sl}
\SetMathAlphabet{\mathsfit}{bold}{\encodingdefault}{\sfdefault}{bx}{n}
\def\gA{{\mathcal{A}}}
\def\gC{{\mathcal{C}}}
\def\gF{{\mathcal{F}}}
\def\gH{{\mathcal{H}}}
\def\gM{{\mathcal{M}}}
\def\gR{{\mathcal{R}}}
\def\gS{{\mathcal{S}}}
\def\gX{{\mathcal{X}}}
\newcommand{\R}{\mathbb{R}}
\DeclareMathOperator{\dist}{dist}
\DeclareMathOperator*{\argmin}{arg\,min}
\DeclareMathOperator{\sign}{sign}
\DeclareMathOperator{\diam}{diam}
\newcommand{\Ric}{\mathrm{Ric}}
\newcommand{\vol}{\mathrm{vol}}
\DeclareMathOperator{\inj}{inj}
\DeclareMathOperator{\supp}{supp}
\newcommand*\diff{\mathop{}\!\mathrm{d}}
\newtheorem{claim}{Claim}
\newtheorem*{claim*}{Claim}
\begin{document}

\maketitle

\begin{abstract}
The manifold hypothesis says that natural high-dimensional data lie on or around a low-dimensional manifold. The recent success of statistical and learning-based methods in very high dimensions empirically supports this hypothesis, suggesting that typical worst-case analysis does not provide practical guarantees. A natural step for analysis is thus to assume the manifold hypothesis and derive bounds that are independent of any ambient dimensions that the data may be embedded in. Theoretical implications in this direction have recently been explored in terms of generalization of ReLU networks and convergence of Langevin methods. In this work, we consider optimal uniform approximations with functions of finite statistical complexity. While upper bounds on uniform approximation exist in the literature using ReLU neural networks, we consider the opposite: lower bounds to quantify the fundamental difficulty of approximation on manifolds. In particular, we demonstrate that the statistical complexity required to approximate a class of bounded Sobolev functions on a compact manifold is bounded from below, and moreover that this bound is dependent only on the intrinsic properties of the manifold, such as curvature, volume, and injectivity radius. 

% \st{We complement existing results by providing theoretical statistical complexity results, which directly relates to generalization properties.} \sm{I would remove this sentence, and blend it with the next one}

% These provide complementary bounds for existing approximation results for ReLU networks on manifolds, which give upper bounds on generalization capacity. \sm{The last sentence also needs some rewording, as it might sound confusing to a reader (regarding what exists and your result)}
% \sm{Too much information conveyed in too few words. Some concepts like `nonlinear width' and their importance for generalization need more explanation.}
\end{abstract}
% \hyt{limit of 9 pages for main text (excludes citations and supplementary)}
\section{Introduction}
Data is ever growing, especially in the current era of machine learning. However, dimensionality is not always beneficial, and having too many features can confound simpler underlying truths. This is sometimes referred to as the curse of dimensionality \citep{altman2018curse}. A classical example is manifold learning, which is known to scale exponentially in the intrinsic dimension \citep{narayanan2009sample}. In the current paradigm of increasing dimensionality, standard statistical tools and machine learning models continue to work, despite the high ambient dimensions arising in cases such as computational imaging \citep{wainwright2019high}. One possible assumption to elucidate this phenomenon comes from the manifold hypothesis, also known as concentration of measure or the blessing of dimensionality \citep{bengio2013representation}. This states that real datasets are actually concentrated on or near low-dimensional manifolds, independently of the ambient dimension that the data is embedded in.

% The manifold hypothesis is generally considered as a theoretical assumption, with empirical studies considering the dimension rather than the construction of the manifold. However, more concrete examples of structure in data have also been considered and exploited \citep{celledoni2021structure}. Group equivariance describes how an operation commutes with symmetries in data, such as rotation or reflections in image reconstruction, with applications in unsupervised learning and enforcing faithful neural network reconstructions \citep{chen2021equivariant,cohen2016group}. 

% Structures can also be found and exploited outside of raw data. Algorithmic structures such as proximal splitting algorithms, arising from convex optimization, are used in Plug-and-Play image reconstruction algorithms \citep{kamilov2023plug,venkatakrishnan2013plug}. Learning-to-optimize considers pushing the underlying geometry of the data into corresponding optimization problems, and exploiting common geometry in the function space to achieve significant acceleration in a learned fashion \citep{andrychowicz2016learning,tan2023data}.

In this work, we explore the consequences of the manifold hypothesis through the lens of approximation theory and statistical complexity. For a class of functions with infinite statistical complexity, we consider a nonlinear width (Definition \ref{def:nonlinearwidth}) in terms of how well it can be approximated in $L^p$ with function classes of finite statistical complexity. We consider how difficult it is to optimally approximate classes of functions with functions of finite statistical complexity in terms of $L^p$ distance. In particular, \Cref{thm:mainResult} demonstrates that on a Riemannian manifold, the optimal error incurred by approximating a bounded Sobolev class using function classes of finite statistical complexity can be lower bounded using only the \emph{implicit} properties of the manifold. 

% This characterizes how close the function class is to having low statistical complexity, motivated by having good sample complexity to learn nearly-optimal functions. \sm{The following sentence clearly states the contribution, while the previous bit is somewhat hard to follow.}

\subsection{Related Works}
% We review some literature surrounding the manifold hypothesis, including theoretical results derived from the manifold hypothesis, and lower bounds on statistical complexity required to approximate a function class. We note that t
The manifold hypothesis is sometimes replaced with the ``union of manifolds'' hypothesis, where the component manifolds are allowed to have different intrinsic dimension \citep{vidal2011subspace,brown2022verifying}. For estimating the intrinsic dimension, we refer to \citep{pope2021intrinsic,block2021intrinsic,levina2004maximum,fefferman2016testing}; for representing the manifold or dimension reduction, we refer to \citep{lee2007nonlinear,kingma2013auto,connor2021variational,tishby2015deep,shwartz2017opening}.

\textbf{Intrinsic dimension estimation.} Methods for empirically testing the manifold hypothesis typically involve assuming the samples follow some statistical process, where the dimension parameter is then estimated from samples using maximum likelihood estimation of distances between points \citep{pope2021intrinsic,block2021intrinsic,levina2004maximum}. \citet{fefferman2016testing} provides a classical algorithm to test whether a set of points can be described by a manifold with sufficient regularity properties. %Throughout, we will consider only Riemannian manifolds that are compact and without boundary. This is standard in the literature, and allows concepts such as injectivity radii, diameters and curvature bounds. 

\textbf{Learning the manifold/dimension reduction.} In modern datasets, a reasonable proxy to the image manifold hypothesis is to have a representation of the low-dimensional structure, constructed from finitely many samples. \citet{fefferman2016testing} provides a classical algorithm to test whether a set of points can be described by a manifold with sufficient regularity properties. Classical methods to find nonlinear low-dimensional manifolds include locally linear embedding \citep{roweis2000nonlinear}, Isomap \citep{tenenbaum2000global}, eigenmaps \citep{belkin2001laplacian}, and topological properties \citep{niyogi2008finding,niyogi2011topological}, with more comprehensive reviews given in \citep{lee2007nonlinear}. A more modern approach uses generative networks by using the latent space as the manifold, which bypasses using the intrinsic dimension itself as an algorithmic parameter \citep{wang2016auto,demers1992non,nakada2020adaptive}.

\textbf{Manifold-driven architectures.} Driven by the manifold hypothesis, several machine learning approaches consider enforcing a network output to be low-dimensional. Common examples are variational auto-encoders, which consist of an ``encoder'' network mapping from the input to a latent space, and ``decoder'' network mapping from the latent space to an output \citep{kingma2013auto,connor2021variational}. By restricting the dimensionality of the latent space, the output manifold will automatically be restricted. Other methods include bottleneck layers in ResNets, which relate to the information bottleneck tradeoff between compression and prediction \citep{tishby2015deep,shwartz2017opening}. 

\textbf{Application: Langevin mixing times.} For an isometrically embedded manifold, \citet{block2020fast} bounds a log-Sobolev constant for probability measures supported on the manifold that are absolutely continuous with respect to the volume measure, mollified with Gaussian densities in the ambient space. \citet{wang2020fast} demonstrate linear convergence of the Kullback-Leibler divergence with rates depending only on the intrinsic dimension for the geodesic Langevin algorithm, which incorporates the Riemannian metric into the noise in the unadjusted Langevin algorithm. %\citet{block2024performance} shows generalization bounds for empirical risk minimizers when the sampling and target distributions are different, showing a gap dependent on the essential supremum of the Radon-Nikodym derivative. 

% \textbf{Complexity lower bounds.}
\citet{devore1989optimal} show that for any continuously parameterized set of functions that can uniformly $\varepsilon$-approximate the unit ball of the Sobolev space $W^{k,\infty}(\R^d)$, the dimension of the parameter set must scale as $\Omega(\varepsilon^{-d/k})$. \cite{gao2019convergence} shows that any class of functions that can robustly interpolate $n$ samples has Vapnik-Chervonenkis (VC) dimension at least $\Omega(nD)$. \citet{bolcskei2019optimal} show lower bounds for the sparsity of a deep neural network for approximating function classes in $L^2(\R^d)$. %\citet{bubeck2021universal} shows that for a class of Lipschitz functions interpolating a noisy set of samples, if the empirical risk is below the noise level, then the Lipschitz constant of $f$ scales as $\sqrt{nD/p}$, where $n$ is the number of samples, $D$ is the ambient dimension, and $p$ is the number of parameters. 
%
% \textbf{Approximation capacity of ReLU networks.} 
\citet{chen2019efficient,chen2022nonparametric} provide approximation rates, empirical risk estimates and generalization bounds of ReLU networks for H\"older functions on manifolds, assuming isometric embedding in Euclidean space. \citet{labate2023low} consider generalization of the class of ReLU networks for H\"older functions on the manifold using the Johnson--Lindenstrauss lemma. 

On the unit hypercube, \cite{yang2024nearly} addresses the complexity of approximating a Sobolev function \textit{constructively} with ReLU DNNs by showing an upper bound on the VC dimension and pseudo-dimension of derivatives of neural networks based on the number of layers, input dimension, and maximum width. \citet{park2020minimum,kim2023minimum,hanin2017approximating} consider lower bounds for the minimum width required for ReLU and ReLU-like networks to $\varepsilon$-approximate $L^p$ functions on Euclidean space and the unit hypercube. 

This work derives lower bounds (\Cref{thm:mainResult,thm:mainResult2}) on the \textit{statistical complexity} in terms of the \textit{nonlinear width}, c.f. Definition \ref{def:nonlinearwidth}, required to approximate Sobolev functions on \textit{compact Riemannian manifolds}. Sobolev functions define expressive classes of functions that can model many physical problems, while also having sufficient regularity properties allowing for functional analysis. In other words, we consider the difficulty of modelling physical problems over structured datasets with simple function classes. %The main results are \Cref{thm:mainResult,thm:mainResult2}

% In \Cref{sec:background}, we formally introduce the concepts of pseudo-dimension and desired notion of the width of a function class, followed by some existing results relating complexity to generalization behavior. We also briefly discuss Riemannian manifolds and prerequisite knowledge needed for the main results. The main results are \Cref{thm:mainResult,thm:mainResult2}, with proof given in \Cref{sec:mainResult}. 

\section{Background}\label{sec:background}
\subsection{Pseudo-Dimension as Complexity}
We consider a concept of statistical complexity called the pseudo-dimension \citep{pollard2012convergence,anthony1999neural}. This extends the classical concept of Vapnik--Chervonenkis (VC) dimension from indicator-valued to real-valued functions.
\begin{definition}\label{def:pseudodim}
    Let $\gH$ be a class of real-valued functions with domain $\gX$. 
    % We say that $\gH$ \emph{P-shatters} a set $\{x_1,...,x_n\}\subset \gX$ if there exist reals $s_1,...,s_n$ such that \todo{more readable}
    % \begin{equation*}
    %     \{(\sign(h(x_i) - s_i))_i \mid h \in \gH\} = \{\pm 1\}^n.
    % \end{equation*}
    Let $X_n = \{x_1,...,x_n\}\subset \gX$, and consider a collection of real numbers $s_1,...,s_n \subset \R^n$. When evaluated at each $x_i$, a function $h \in \gH$ will lie on one side\footnote{We adopt the notation of $\sign(0) = +1$ for well-definedness, but the other option is equally valid.} of the corresponding $x_i$, i.e. $\sign(h(x_i) - s_i) = \pm 1$. The vector of such sides $(\sign(h(x_i) - s_i))_{i=1}^n$ is thus an element of $\{\pm1\}^n$.
    
    We say that $\gH$ \emph{P-shatters} $X_n$ if there exist real numbers $s_1,...,s_n$ such that all possible sign combinations are obtained, i.e.,
    \begin{equation*}
        \{(\sign(h(x_i) - s_i))_i \mid h \in \gH\} = \{\pm 1\}^n.
    \end{equation*}
    The \emph{pseudo-dimension} $\dim_p(\gH)$ is the cardinality of the largest set that is P-shattered:
    \begin{equation}
        \dim_p(\gH) = \sup \left\{n \in \mathbb{N} \mid \exists \{x_1,..., x_n\}\subset \gX \text{ that is P-shattered by } \gH\right\}.
    \end{equation}
\end{definition}
The VC dimension is defined similarly, but without the biases $s_i$ and with $\gH$ being a class of binary functions taking values in $\{\pm 1\}$. The pseudo-dimension satisfies similar properties as the VC dimension, such as coinciding with the standard notion of dimension for vector spaces of functions.

\begin{figure}[t]
    \centering
    \includegraphics[width=0.8\linewidth,trim={0 23px 0 30px},clip]{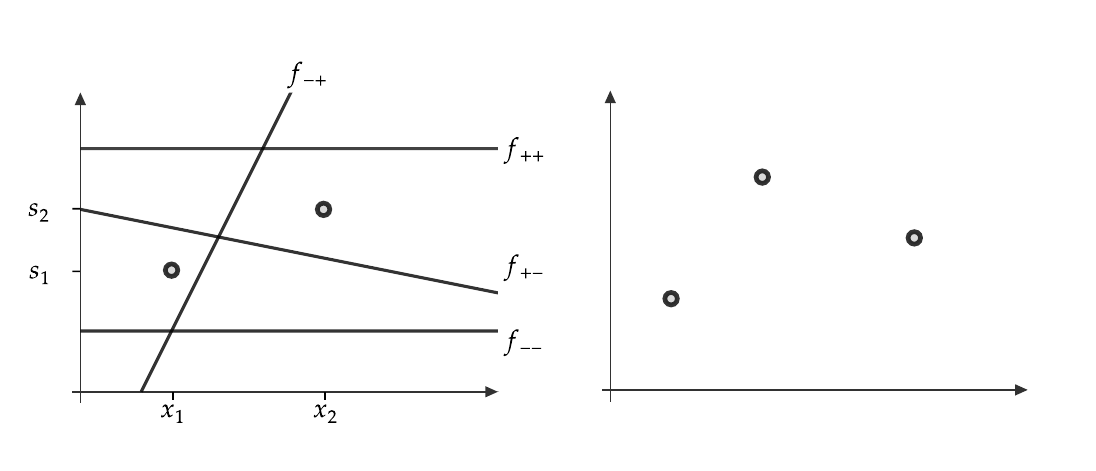}\vspace{-4px}
    \caption{For the class of affine 1D functions $\{x \mapsto ax+b \mid a, b \in \R\}$, this choice of $\{x_1, x_2\} \subset \R$ and $s_1, s_2 \in \R$ on the left is P-shattered by the affine functions $f_{\pm \pm}$. However, there is no arrangement of three points that is P-shattered by affine functions, e.g. the arrangement on the right would not have a function that goes below the left and right points but above the middle point. Therefore, the pseudo-dimension of affine 1D functions is 2.}
    \label{fig:enter-label}
\end{figure}

\begin{proposition}[{\citealt[Thm. 11.4]{anthony1999neural}}]
    If $\gH'$ is an $\R$-vector space of real-valued functions, then $\dim_p(\gH') = \dim(\gH')$ as a vector space. In particular, if $\gH$ is a subset of a vector space $\gH'$ of real-valued functions, then $\dim_p(\gH) \le \dim(\gH')$.
\end{proposition}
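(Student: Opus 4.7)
Setting $d := \dim(\gH')$, the plan is to establish the two inequalities $\dim_p(\gH') \ge d$ and $\dim_p(\gH') \le d$ separately, and then deduce the ``in particular'' clause from the obvious monotonicity of pseudo-dimension under class inclusion.

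For the lower bound, I would exhibit $d$ points that are P-shattered. Fix a basis $\phi_1, \dots, \phi_d$ of $\gH'$. A standard induction on $k$, using row-expansion of $\det(\phi_j(x_i))_{i,j=1}^k$ together with the linear independence of the $\phi_j$, produces points $x_1, \dots, x_d \in \gX$ for which the evaluation map $E : \gH' \to \R^d$, $E(h) = (h(x_1), \dots, h(x_d))$, is a linear isomorphism. Choosing all thresholds $s_i = 0$, every target $\sigma \in \{\pm 1\}^d$ is itself realised as $E(h)$ for some $h \in \gH'$, so $\{x_1, \dots, x_d\}$ is P-shattered, giving $\dim_p(\gH') \ge d$.

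For the upper bound, I would suppose toward contradiction that $\{x_1, \dots, x_{d+1}\}$ is P-shattered with thresholds $s_1, \dots, s_{d+1}$. The affine evaluation map $\Phi : \gH' \to \R^{d+1}$, $\Phi(h)_i = h(x_i) - s_i$, has image $W$ which is an affine subspace of $\R^{d+1}$ of dimension at most $d$, hence lies in a hyperplane $\{y : \langle a, y \rangle = c\}$ with $a \ne 0$. The core step, and the only delicate one, is to convert this single linear constraint into a sign pattern that no $y \in W$ can realise. After possibly replacing $(a,c)$ by $(-a,-c)$, I may assume some $a_j > 0$. Define $\tau_i := -\sign(a_i)$ using the paper's convention $\sign(0) = +1$. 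Any $y \in W$ whose coordinate-signs equal $\tau$ satisfies $a_i y_i \le 0$ for every $i$ with $a_j y_j < 0$ strictly, so $\langle a, y \rangle < 0$; when $c \ge 0$ this contradicts the hyperplane equation. If instead $c < 0$, the opposite pattern $-\tau$ forces $a_i y_i \ge 0$ coordinatewise, so $\langle a, y \rangle \ge 0 > c$, again a contradiction. Either way some pattern in $\{\pm 1\}^{d+1}$ is missed, contradicting P-shattering.

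The ``in particular'' clause is then immediate: any P-shattering witness $(x_i, s_i)$ for $\gH$ is a fortiori a witness for any superclass $\gH' \supseteq \gH$ with the same thresholds, so $\dim_p(\gH) \le \dim_p(\gH') = \dim(\gH')$. I expect the main technical obstacle to be the sign bookkeeping in the upper bound, in particular verifying that indices with $a_i = 0$ contribute $0$ to $\langle a, y \rangle$ regardless of the prescribed $\tau_i$, and that the strict inequality in $\langle a, y \rangle < 0$ comes from a genuinely positive coordinate $a_j$; the $\sign(0) = +1$ convention is precisely what forces the case split on the sign of $c$.
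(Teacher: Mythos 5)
The paper cites this as \citet[Thm.~11.4]{anthony1999neural} and gives no proof of its own, so there is no in-paper argument to compare against. Your proof is correct and is essentially the standard one: the lower bound via a Vandermonde-type induction producing $d$ points on which the evaluation map is surjective (so with zero thresholds every sign pattern in $\{\pm1\}^d$ is hit), and the upper bound via noting that for any $d+1$ points and any thresholds the shifted evaluation image lies in an affine hyperplane $\{\langle a,y\rangle = c\}$, after which the sign bookkeeping with $\tau_i=-\sign(a_i)$ (and the case split on $\sign c$, falling back to $-\tau$) exhibits an unrealizable pattern. The handling of zero coordinates of $a$ and the use of the paper's $\sign(0)=+1$ convention are done correctly, and the ``in particular'' clause does indeed follow from monotonicity of pseudo-dimension under inclusion.
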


Like the VC dimension and other statistical complexity measures such as the Rademacher or Gaussian complexity, low complexity leads to better generalization properties of empirical risk minimizers \citep{bartlett2002rademacher}. One example is as follows, where a precise definition of sample complexity can be found in \Cref{app:sampComp}. %Informally, the $(\epsilon,\delta)$-sample complexity measures the generalization performance of an approximate empirical risk minimizer, given by the number of samples such that an approximate empirical risk minimizer has true risk at most $\epsilon$ above the optimum with probability $1-\delta$. 
\begin{proposition}[{\citealt[Thm. 19.2]{anthony1999neural}}]\label{prop:pDimSampComp}
    Let $\gH$ be a class of functions mapping from a domain $X$ into $[0,1] \subset \R$, and that $\gH$ has finite pseudo-dimension. Then the $(\epsilon,\delta)$-sample complexity (Definition \ref{def:samcomp}) is bounded by
    \begin{equation}
        m_L(\epsilon,\delta) \le \frac{128}{\epsilon^2} \left(2\dim_p(\gH) \log \left(\frac{34}{\epsilon}\right) + \log \left(\frac{16}{\delta}\right)\right).
    \end{equation}
\end{proposition}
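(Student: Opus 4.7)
The plan is to reduce the sample-complexity statement to a uniform convergence bound over $\gH$ and then control that bound via covering numbers derived from the pseudo-dimension. Recall that $m_L(\epsilon,\delta)$ measures the minimum sample size so that empirical risk minimization over $\gH$ produces a hypothesis whose true risk is within $\epsilon$ of $\inf_{h \in \gH} R(h)$ with probability at least $1 - \delta$. The standard reduction is that if $\sup_{h \in \gH} |\hat{R}_m(h) - R(h)| \le \epsilon/2$ holds on the sample, then any empirical minimizer has excess risk at most $\epsilon$; hence it suffices to show
\begin{equation*}
\Pr\!\left(\sup_{h \in \gH} |\hat{R}_m(h) - R(h)| > \epsilon/2 \right) \le \delta
\end{equation*}
whenever $m$ meets the stated bound.

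Second, I would apply the classical double-sample (ghost-sample) symmetrization: draw an independent copy of size $m$ and bound the above deviation probability by twice the probability that $\sup_{h \in \gH}|\hat{R}_m(h) - \hat{R}'_m(h)|$ exceeds $\epsilon/4$. Conditioning on the combined sample of size $2m$, a swapping (Rademacher) argument reduces the analysis to a finite set of function values, at which point the only hypothesis-class-dependent quantity is the empirical $L^1$-covering number $\gN_1(\eta, \gH, X_{2m})$. The central ingredient is then the Pollard/Haussler generalization of the Sauer--Shelah lemma: for $[0,1]$-valued function classes,
\begin{equation*}
\gN_1(\eta, \gH, X_{2m}) \le e(\dim_p(\gH)+1)\left(\frac{2e}{\eta}\right)^{\dim_p(\gH)},
\end{equation*}
so the statistical complexity enters only through this polynomial-in-$1/\eta$ cover size.

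Third, with the covering-number bound in hand, I would combine a Hoeffding inequality on each individual $h$ in the cover with a union bound. This yields an inequality of the form
\begin{equation*}
\Pr\!\left(\sup_{h \in \gH} |\hat{R}_m(h) - R(h)| > \epsilon/2 \right) \le 8\,\gN_1(\epsilon/32, \gH, X_{2m})\,\exp\!\left(-\tfrac{m\epsilon^2}{128}\right),
\end{equation*}
which, when set equal to $\delta$ and solved for $m$ using the explicit Pollard/Haussler estimate above, gives exactly the stated expression with the constants $128$, $34$, and $16$.

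The main obstacle is not conceptual but bookkeeping: each of the symmetrization factor, the choice of covering resolution $\eta$ relative to $\epsilon$, the ratio between packing and covering numbers, and the exponent in the Hoeffding bound contributes to the universal constants. Slight suboptimality at any step degrades $128$, $34$, or $16$. Tracking these constants carefully---especially the choice $\eta = \epsilon/32$ and the use of an $\epsilon/8$-slack in the symmetrization---is the delicate part; with that care, the bound follows directly.
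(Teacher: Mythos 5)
The paper does not prove this proposition; it is quoted verbatim from \citet[Thm.\ 19.2]{anthony1999neural}, and the only ``proof'' in the paper is the formalized restatement in Appendix~A, again as a citation. Your sketch reconstructs the standard route that the cited textbook itself takes: reduction of sample complexity to two-sided uniform convergence, double-sample symmetrization with a Rademacher swap, an empirical $L^1$-covering bound via the Pollard--Haussler packing estimate (the same bound the paper records as Lemma~\ref{lem:PackingL1}), and a Hoeffding-plus-union-bound finish. So at the level of strategy your plan matches the source, and the constant-tracking you flag is indeed the only remaining work.

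Two small points you gloss over and should fill in before calling this a proof. First, the risk in Definition~\ref{def:samcomp} is the \emph{squared} loss $R(h)=\E[(h(x)-y)^2]$, so the symmetrization and cover must be applied to the loss class $\{(x,y)\mapsto (h(x)-y)^2 : h\in\gH\}$, not to $\gH$ itself. One then passes to covers of $\gH$ using that $t\mapsto (t-y)^2$ is $2$-Lipschitz on $[0,1]$; this costs a constant in $\eta$ that feeds into the $34$ and $128$. Second, the algorithm in the cited theorem is an \emph{approximate}-SEM algorithm with slack $16/\sqrt{m}$, not an exact ERM, so the reduction in your first step needs an extra $\epsilon$-budget term for that slack; writing ``empirical risk minimization'' without the approximation is slightly off and would change the constants if carried through. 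Neither is a conceptual obstruction, but both must be handled to land exactly on the stated bound.
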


To compare the approximation of one function class by another, we consider a nonlinear width induced by a normed space. 
\begin{definition}[Nonlinear $n$-width]\label{def:nonlinearwidth}
    Let $\gF$ be a normed space of functions. Given two subsets $F_1, F_2 \subset \gF$, the \emph{(asymmetric) Hausdorff distance} between the two subsets is the largest distance between elements of $F_1$ and their closest element in $F_2$:
    \begin{equation}
        \dist(F_1, F_2; \gF) = \sup_{f_1 \in F_1} \inf_{f_2 \in F_2} \|f_1 - f_2\|_{\gF}.
    \end{equation}
    
    For a subset $F\subset \gF$, the \emph{nonlinear $n$-width} is given by the optimal (asymmetric) Hausdorff distance between $F$ and $\gH^n$, infimized over classes $\gH^n$ in $\gF$ with $\dim_p(\gH^n) \le n$:
    \begin{equation}\label{eq:nWidth}
        \rho_n(F, \gF) \coloneqq \inf_{\gH^n} \dist(F, \gH^n; \gF) = \inf_{\gH^n} \sup_{f \in F} \inf_{h \in \gH^n} \|f-h\|_\gF.
    \end{equation}
\end{definition}
This width measures the complexity in terms of how closely the entire function class can be approximated with another class of finite pseudo-dimension. This is useful in cases where $F$ has infinite pseudo-dimension, and the nonlinear $n$-width acts as a surrogate measure of complexity, given by how well $F$ can be approximated by classes of finite pseudo-dimension. In \Cref{sec:mainResult}, we provide a lower bound on the nonlinear $n$-width of a bounded Sobolev class of functions. In terms of neural network approximation, these lower bounds complement existing approximation results of ReLU networks, which effectively provide an upper bound on the width by using the class of (bounded width, layers, and parameters) ReLU networks as the finite pseudo-dimension approximating class.

\subsection{Riemannian Geometry}
% We begin with some basic definitions, as well as recall the Bishop--Gromov comparison theorem, a volume argument that will be used throughout the next section. 
The manifold hypothesis can be readily expressed in terms of Riemannian geometry. A quick review and notation are given in \Cref{app:Riemannian}, and we further refer to \citet{bishop2011geometry,gallot2004riemannian}. Throughout, we will assume that our Riemannian manifold is finite-dimensional, compact, without boundary, and connected. We note that the connectedness assumption can be dropped by working instead on each connected component.% since the arguments will be intrinsic and do not depend on any embeddings. 

We state the celebrated Bishop--Gromov theorem \citep{petersen2006riemannian,bishop1964relation}. This is an essential volume-comparison theorem, used to tractably bound the volume of balls as they grow.

% \sm{The notations of Bishop--Gromov are undefined. It would be good to indicate that they are defined in the Appendix.}
\begin{theorem}[Bishop--Gromov]\label{thm:BisGro}
    Let $(M,g)$ be a complete $d$-dimensional Riemannian manifold whose Ricci curvature is bounded below by $\Ric \ge (d-1)K$, for some $K \in \R$. Let $M^d_K$ be the complete $d$-dimensional simply connected space of constant sectional curvature $K$, i.e. a sphere, Euclidean space, or scaled hyperbolic space if $K>0$, $K=0$, $K<0$ respectively. Then for any $p\in M$ and $p_K \in M_K^d$, we have that
    \begin{equation}
        \phi(r) = \vol_M(B_r(p)) / \vol_{M^d_K}(B_r(p_K))
    \end{equation}
    is non-increasing on $(0,\infty)$. In particular, $\vol_M(B_r(p)) \le \vol_{M^d_K}(B_r(p_K))$.
\end{theorem}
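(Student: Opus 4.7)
The plan is to reduce the claim to a pointwise volume-density comparison along radial geodesics, obtained from the Riccati equation for the mean curvature of geodesic spheres, and then to promote this pointwise inequality to the integrated volume ratio. First I would fix $p \in M$ and introduce geodesic polar coordinates via the exponential map $\exp_p: T_p M \to M$. Writing $v = s\theta$ with $\theta$ in the unit sphere $S_p \subset T_p M$ and $s \ge 0$, pulling the Riemannian volume form back to the tangent space yields a density $A(s,\theta)$ with
\begin{equation*}
    \vol_M(B_r(p)) = \int_{S_p} \int_0^{\min(r, c(\theta))} A(s,\theta) \, ds \, d\theta,
\end{equation*}
where $c(\theta)$ is the distance along $\gamma_\theta$ to the cut locus. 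The model space admits a rotationally symmetric analogue $A_K(s) = \sigma_K(s)^{d-1}$, where $\sigma_K$ is the explicit sine/linear/sinh solution of $\sigma_K'' + K\sigma_K = 0$ with $\sigma_K(0)=0$, $\sigma_K'(0)=1$.

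Next I would study the logarithmic derivative $m(s) = \partial_s \log A(s,\theta)$, which coincides with the mean curvature of the geodesic sphere of radius $s$ pulled back along $\gamma_\theta$. Computing $m'(s)$ via the structure equations (equivalently, through the Jacobi field representation of $A$) yields the trace Riccati identity $m'(s) = -\|S(s)\|^2 - \Ric(\gamma'_\theta, \gamma'_\theta)$, where $S(s)$ is the shape operator of the geodesic sphere. Cauchy--Schwarz gives $\|S\|^2 \ge m^2/(d-1)$, and combined with $\Ric \ge (d-1)K$ this produces the Riccati inequality $m' + m^2/(d-1) + (d-1)K \le 0$. The model logarithmic derivative $m_K(s) = (d-1)\sigma_K'/\sigma_K$ satisfies the same relation with equality, and both exhibit the $(d-1)/s$ asymptotics at $s=0$. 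A standard Riccati comparison then yields $m(s) \le m_K(s)$ for all $s < c(\theta)$, so that $s \mapsto A(s,\theta)/A_K(s)$ is non-increasing on $(0, c(\theta))$.

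Finally I would lift the pointwise monotonicity to the claimed global statement. Extending $A(s,\theta)$ by zero beyond the cut locus preserves the monotonicity of the ratio, since the pulled-back volume form vanishes there. For any $0 < r_1 < r_2$, the function $\phi(r)$ is then a ratio of integrals of the form $\int_0^r f_\theta \, ds / \int_0^r g \, ds$, where $f_\theta/g$ is non-increasing and $g > 0$. The elementary lemma that such integrated quotients inherit monotonicity, together with Fubini in $\theta$ (the spherical measure being common to numerator and denominator), delivers $\phi(r_2) \le \phi(r_1)$; sending $r_1 \to 0^+$ and using $\phi(r) \to 1$ then gives the absolute volume bound.

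The main obstacle I expect is handling the cut locus and focal points rigorously: $A(s,\theta)$ may vanish at a focal point strictly before $c(\theta)$, and the Riccati analysis blows up there, so the comparison argument must be phrased to run only while $A$ is positive and then be stitched together with the zero-extension beyond. A careful treatment, e.g. showing that any focal point of the true manifold must precede the corresponding focal point in $M_K^d$ (so the model Riccati solution stays finite at least as long as the true one), is what makes the monotonicity of the integrated ratio go through cleanly across all $\theta \in S_p$.
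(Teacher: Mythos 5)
The paper does not prove Theorem \ref{thm:BisGro}; it states Bishop--Gromov as a classical result and cites Petersen and Bishop--Crittenden for the proof, so there is no in-paper argument to compare yours against. Your sketch follows the standard textbook route: geodesic polar coordinates, the trace Riccati identity $m' + \operatorname{tr}(S^2) + \Ric(\gamma',\gamma') = 0$ for the mean curvature $m$ of geodesic spheres, Cauchy--Schwarz to get $\operatorname{tr}(S^2) \ge m^2/(d-1)$, Riccati comparison against the model $m_K = (d-1)\sigma_K'/\sigma_K$ to obtain monotonicity of the density ratio $A(s,\theta)/A_K(s)$ along each radial direction, and then the ``monotone quotients of integrals'' lemma (which you state correctly: if $f/g$ is non-increasing and $g>0$ then $\int_0^r f / \int_0^r g$ is non-increasing) applied after extending $A$ by zero past the cut locus. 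This is exactly the argument one finds in Petersen or Gallot--Hulin--Lafontaine, and it is correct. Your concluding remark identifies the one genuinely delicate point: for $K>0$ one needs the Bonnet--Myers/Rauch-type fact that conjugate points in $M$ occur no later than in $M_K^d$, so the model density $A_K$ stays positive at least as long as $A$ does, allowing the Riccati comparison and the zero-extension to be glued without the ratio becoming $0/0$. Flagging that is the right instinct, and a full proof would spell it out; as a sketch, yours is sound.
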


We note that in a space of constant sectional curvature $M^d_K$, the volume of a ball of radius $r$ does not depend on the center. Without loss of generality, we write $\vol_{M^d_K}(B_r)$ to mean $\vol_{M^d_K}(B_r(p_K))$ for any point $p_K \in M^d_K$. Bishop--Gromov can be specialized as integrals in hyperbolic space. 

% A simple corollary that uses the volume of a hyperbolic ball to specify the rate of change in a simple closed form follows . 
% We write $\Ric \ge K$ for $K \in \R$ to mean that $\Ric(v) \ge K$ holds for all unit vectors in the tangent bundle $v \in T M$. We focus on the hyperbolic case, rewriting Bishop--Gromov in terms of integrals.

\begin{corollary}[{\citealt{block2020fast,ohta2014ricci}}]\label{thm:BisGroSinh}
    Let $(M,g)$ be a complete $d$-dimensional Riemannian manifold such that $\Ric \ge (d-1) K$ for some $K<0$. For any $0<r<R$, we have
    \begin{equation}
        \frac{\vol_M(B_R(x))}{\vol_M(B_r(x))} \le \frac{\int_0^R s^{d-1}}{\int_0^r s^{d-1}},\quad s(u) = \sinh(u\sqrt{|K|}).
    \end{equation}
\end{corollary}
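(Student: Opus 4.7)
The plan is to derive this as a direct specialization of the Bishop--Gromov theorem (Theorem \ref{thm:BisGro}) by explicitly computing the volume of balls in the constant-curvature model space. First, from Theorem \ref{thm:BisGro}, the function $\phi(u) = \vol_M(B_u(x))/\vol_{M_K^d}(B_u)$ is non-increasing on $(0,\infty)$. Applied at $0 < r < R$, this gives $\phi(R) \le \phi(r)$, which rearranges to
\begin{equation*}
\frac{\vol_M(B_R(x))}{\vol_M(B_r(x))} \le \frac{\vol_{M_K^d}(B_R)}{\vol_{M_K^d}(B_r)}.
\end{equation*}
So the task reduces to expressing the right-hand side in terms of the integral of $s(u)^{d-1}$.

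The second step is to invoke the explicit form of the volume element in the $d$-dimensional simply connected space of constant sectional curvature $K<0$. In geodesic polar coordinates centered at $p_K \in M_K^d$, the Riemannian volume element takes the form
\begin{equation*}
\diff V \;=\; \left(\frac{\sinh(u\sqrt{|K|})}{\sqrt{|K|}}\right)^{d-1} \diff u \, \diff \Omega_{d-1},
\end{equation*}
where $\diff \Omega_{d-1}$ is the standard surface measure on the unit $(d-1)$-sphere. This follows from the standard Jacobi-field computation in a space-form (the radial Jacobi fields in a space of constant sectional curvature $K<0$ grow like $\sinh(u\sqrt{|K|})/\sqrt{|K|}$). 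Integrating out the angular component yields
\begin{equation*}
\vol_{M_K^d}(B_u) \;=\; \frac{\omega_{d-1}}{|K|^{(d-1)/2}} \int_0^u \sinh(t\sqrt{|K|})^{d-1} \diff t,
\end{equation*}
where $\omega_{d-1}$ is the surface area of the unit $(d-1)$-sphere.

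Finally, taking the ratio $\vol_{M_K^d}(B_R)/\vol_{M_K^d}(B_r)$, the prefactor $\omega_{d-1}/|K|^{(d-1)/2}$ cancels, leaving exactly $\int_0^R s^{d-1}/\int_0^r s^{d-1}$ with $s(u)=\sinh(u\sqrt{|K|})$. Combining this with the Bishop--Gromov monotonicity inequality from the first step yields the stated corollary.

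The routine calculations are the integration against angular measure and the cancellation of constants; the only non-trivial input is the closed form for the radial volume density in the hyperbolic space-form. The main (minor) obstacle is thus correctly identifying and citing this density, which is standard in Riemannian geometry (e.g.\ \citet{gallot2004riemannian}) and requires no further work beyond quoting the Jacobi-field solution in constant negative curvature.
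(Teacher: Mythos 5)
Your derivation is correct and is the standard argument; the paper does not prove this corollary inline but delegates to \citet{block2020fast,ohta2014ricci}, and what you wrote — Bishop--Gromov monotonicity $\phi(R)\le\phi(r)$ rearranged, combined with the explicit geodesic-polar volume element $\big(\sinh(u\sqrt{|K|})/\sqrt{|K|}\big)^{d-1}\,\diff u\,\diff\Omega_{d-1}$ in the hyperbolic space-form, with the angular factor and the $|K|^{-(d-1)/2}$ prefactor cancelling in the ratio — is exactly the proof those references give. No gaps.
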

We additionally need the following definitions, bounded in \Cref{prop:coveringNum} and Lemma \ref{lem:PackingL1}. 
\begin{definition}[Packing number]
    For a metric space $(M,d)$ and radius $\varepsilon>0$, the \emph{packing number} $N_\varepsilon(M)$ is the maximum number of points $x_1,...,x_n \in M$ such that the open balls $B_\varepsilon(x_i)$ are disjoint. The \emph{$\varepsilon$-metric entropy} $\mathcal{M}_\varepsilon$ is the maximum number of points $x_1,...,x_m \in M$ such that $d(x_i, x_j) \ge \varepsilon$ for $i \ne j$. Note that $\mathcal{M}_{2\varepsilon} \le N_\varepsilon \le \mathcal{M}_\varepsilon.$
\end{definition}
% \begin{remark}
%     The following inequality holds:
%     \begin{equation}
%         \mathcal{M}_{2\varepsilon} \le N_\varepsilon \le \mathcal{M}_\varepsilon.
%     \end{equation}
%     The first inequality holds since any $2\varepsilon$-separated subset induces disjoint open $\varepsilon$-balls. The second inequality holds since any set of disjoint $\varepsilon$-balls has necessarily $\varepsilon$-separated centers.
% \end{remark}
\subsection{Sobolev Functions on Manifolds}
We now define the bounded Sobolev ball on manifolds, which will be the subject of approximation in the next section. There are different ways to define the Sobolev spaces on manifolds due to the curvature differing only by constants, and we consider the variant presented in \citet{hebey2000nonlinear}. 

\begin{definition}[{\citealt[Sec. 2.2]{hebey2000nonlinear}}]
    Let $(M,g)$ be a smooth Riemannian manifold. For integer $k$, $p \ge 1$, and smooth $u:M \rightarrow \R$, define by $\nabla^k u$ the $k$'th covariant derivative of $u$, and $|\nabla^k u|$ its norm, defined in a local chart as 
    \begin{equation}
        |\nabla^k u|^2 = g^{i_1 j_1}... g^{i_k j_k} (\nabla^k u)_{i_1...i_k} (\nabla^k u)_{j_1...j_k},
    \end{equation}
    using Einstein's summation convention where repeated indices are summed. Define the set of admissible test functions (with respect to the volume measure) as 
    \begin{equation}
    \mathcal{C}^{k,p}(M) \coloneqq \left\{u \in \mathcal{C}^\infty(M) \mid \forall j=0,...,k,\, \int_M |\nabla^j u|^p \diff \vol_M < +\infty \right\},
    \end{equation}
    and for $u \in \mathcal{C}^{k,p}(M)$, the Sobolev $W^{k,p}$ norm as
    \begin{equation}
        \|u\|_{W^{k,p}} \coloneqq \sum_{j=0}^k \left(\int_M |\nabla^j u|^p \diff \vol_M \right)^{1/p} = \sum_{j=0}^k \|\nabla^j u\|_p
    \end{equation}
    The Sobolev space $W^{k,p}(M)$ is defined as the completion of $\mathcal{C}^{k,p}$ under  $\|\cdot\|_{W^{k,p}}$.
\end{definition}

It can be shown that Sobolev functions on a compact Riemannian manifold share similar embedding properties as in Euclidean space, and we refer to \citet{hebey2000nonlinear,aubin2012nonlinear} for a more detailed treatment of such results. 
%
% \begin{theorem}[{\citealt[Thm. 2.6]{hebey2000nonlinear}}]
%     $(M,g)$ smooth compact Riemannian $n$-manifold. For any $1\le q < p$ and integers $0\le m < k$, if $1/p = 1/q - (k-m)/n$, then $H^q_k(M) \subset H^p_m(M)$. In particular, for any $q \in [1,n)$ real, $H^q_1(M) \subset L^p(M)$, where $1/p = 1/q - 1/n$.
% \end{theorem}
% \begin{theorem}[{\citealt[Thm. 2.8]{hebey2000nonlinear}}]
%     $(M,g)$ smooth compact Riemannian $n$-manifold, $q\ge 1$ real, $\lambda \in (0,1)$ real. If $1/q \le (1-\lambda)/n$, then $H^q_1(M) \subset C^\lambda(M)$.
% \end{theorem}
%
We additionally adopt the following definition of a bounded Sobolev ball, providing a compact space of functions to approximate.
\begin{definition}
    For constant $C\ge 0$, the bounded Sobolev ball $W^{k,p}(C; M)$ is given by the set of all functions with covariant derivatives bounded in $L^p = L^p(M, \vol_M)$ by $C$:
    \begin{equation}
        W^{k,p}(C;M) = \left\{u \in W^{k,p}(M) \mid \forall l \le k,\, \|\nabla^l u\|_p \le C \right\}
    \end{equation}
    We write $W^{k,p}(C)$ to mean $W^{k,p}(C;M)$ for ease of notation.
\end{definition}
% We note that it is possible to have other definitions of bounded Sobolev functions, such as bounding only the $L^p$-norm of one multi-indexed derivative. However, this will only change the bounds up to some constant. Moreover, this class has infinite pseudo-dimension, motivating approximation with simpler function classes, measured by the nonlinear $n$-width. 

\section{Main Result}\label{sec:mainResult}
This section begins with a statement of the main approximation result, a lower bound on the nonlinear $n$-width \labelcref{eq:nWidth} of bounded Sobolev balls $W^{1,p}(1)$ in $L^q$. This is followed by a high-level intuition behind the proof, then the proof in detail. The supporting lemmas are deferred to \Cref{app:lems}.
% \subsection{Statement}
\begin{theorem}\label{thm:mainResult}
    Let $(M,g)$ be a $d$-dimensional compact (separable) Riemannian manifold without boundary. From compactness, there exist real constants $K,\, \inj(M)$ such that:
    \begin{enumerate}
        \item The Ricci curvature satisfies $\Ric \ge (d-1)K$, where $K < 0$;
        \item The injectivity radius is positive, $\inj(M)>0$.
    \end{enumerate}
    For any $1 \le p,q \le +\infty$, the nonlinear width of $W^{1,p}(1)$ satisfies the lower bound for sufficiently large $n$:
    \begin{equation}
        \rho_n(W^{1,p}(1),L^q(M)) \ge C(d,K,\vol(M),p,q) (n + \log n)^{-1/d}.
    \end{equation}
    The (explicit) constant is independent of any ambient dimension that $(M,g)$ may be embedded in.
\end{theorem}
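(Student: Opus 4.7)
The plan is to realise the classical bump-function lower bound for nonlinear widths in the Riemannian setting: I construct exponentially many pairwise well-separated functions inside $W^{1,p}(1)$, and then invoke a Haussler-type metric entropy bound for pseudo-dimension classes to force $n$ to be large. Throughout I restrict to $1 \le p, q < \infty$; the endpoint cases follow from $\|\cdot\|_{L^\infty} \ge \vol(M)^{-1/q}\|\cdot\|_{L^q}$ and a minor adjustment of the scaling for $p=\infty$.

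First, fix $\varepsilon < \inj(M)$ and greedy-pack $M$ by a maximal $2\varepsilon$-separated set, giving disjoint geodesic balls $\{B_\varepsilon(x_i)\}_{i=1}^N$ with $N \ge C_1 \vol(M)\varepsilon^{-d}$, where Bishop--Gromov (Corollary~\ref{thm:BisGroSinh}) controls $\vol(B_{2\varepsilon})$ from above in terms of $K$ and $d$. On each ball define a radial bump $\psi_i(x) = \phi(d_M(x,x_i)/\varepsilon)$ for a fixed smooth profile $\phi:[0,1]\to[0,1]$ with $\phi\equiv 1$ on $[0,1/2]$ and $\phi(1)=0$. Since $|\nabla d_M(\cdot,x_i)| \equiv 1$ a.e.\ inside the injectivity radius, $|\nabla \psi_i| \le C_2 \varepsilon^{-1}$, and a uniform two-sided Jacobian bound for the exponential map (available by compactness) yields $c_1\varepsilon^d \le \|\psi_i\|_p^p, \|\psi_i\|_q^q \le c_2\varepsilon^d$ and $\|\nabla \psi_i\|_p^p \le c_3\varepsilon^{d-p}$.

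Next, form $f_\sigma = \alpha \sum_{i=1}^N \sigma_i\psi_i$ for $\sigma \in \{\pm 1\}^N$. Disjoint supports linearise every Sobolev norm, and choosing $\alpha = c_0\varepsilon$ uniformly yields $f_\sigma \in W^{1,p}(1)$. A Gilbert--Varshamov code $\mathcal{S} \subset \{\pm 1\}^N$ of size $|\mathcal{S}| \ge 2^{N/8}$ and pairwise Hamming distance $\ge N/4$ then gives pairwise separation $\|f_\sigma - f_{\sigma'}\|_{L^q(M)} \ge c_5\varepsilon$ for distinct $\sigma,\sigma' \in \mathcal{S}$. Suppose for contradiction that $\gH^n$ satisfies $\dim_p(\gH^n) \le n$ and $\rho_n(W^{1,p}(1),L^q(M)) < c_5\varepsilon/3$. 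Picking approximants $h_\sigma \in \gH^n$ and truncating pointwise to $[-\alpha,\alpha]$ only decreases the $L^q$-error (since $|f_\sigma|\le \alpha$), and since composition with a nondecreasing function preserves Definition~\ref{def:pseudodim}, this produces a uniformly $\alpha$-bounded class $\bar\gH^n$ of pseudo-dimension at most $n$ containing the $(c_5\varepsilon/3)$-separated family $\{h_\sigma\}_{\sigma \in \mathcal{S}}$. Haussler's packing bound on the normalised probability measure $\mu = \vol_M/\vol(M)$, combined with the inequality $|f-g|^q \le (2\alpha)^{q-1}|f-g|$ to pass between $L^1(\mu)$ and $L^q(\vol_M)$, then gives
\begin{equation*}
\tfrac{N}{8}\log 2 \;\le\; \log|\mathcal{S}| \;\le\; \log\mathcal{N}\bigl(c_5\varepsilon/3,\bar\gH^n,L^q(M)\bigr) \;\le\; C_6\bigl(\log n + nq\log(C_7\alpha\vol(M)^{1/q}/\varepsilon)\bigr).
\end{equation*}
Since $\alpha/\varepsilon = c_0$ is an absolute constant, the right-hand side is bounded by $C_8(n+\log n)$, and combined with $N \ge C_1\vol(M)\varepsilon^{-d}$ this forces $\varepsilon \ge C_9(n+\log n)^{-1/d}$, giving the claimed bound.

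The main obstacle I anticipate is keeping every constant intrinsic and explicit. The upper volume comparison is immediate from Bishop--Gromov under the Ricci lower bound, but the matching lower estimate $\vol(B_\varepsilon(x))\ge c\varepsilon^d$ uniformly in $x \in M$ is not a consequence of $\Ric \ge (d-1)K$ alone: I would obtain it from the power-series expansion of the volume form in normal coordinates, using compactness of $M$ to bound all higher curvature terms so that the resulting constant depends only on $d, K, \vol(M), \inj(M)$ and not on any ambient embedding. A secondary, largely bookkeeping, point is the invariance of the pseudo-dimension under pointwise truncation, which I verify directly from Definition~\ref{def:pseudodim}.
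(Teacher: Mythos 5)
Your proposal is correct and follows essentially the same route as the paper's proof: a maximal geodesic packing, disjoint bump functions scaled to sit inside $W^{1,p}(1)$, a Gilbert--Varshamov / Lorentz-type code (the paper uses \citealt[Lem.~2.2]{lorentz1996constructive}) to extract an exponentially large well-separated family, a pointwise clamping step to reduce to uniformly bounded functions without increasing pseudo-dimension, and Haussler's packing bound to obtain the contradiction $2^{cN}\lesssim e(n+1)C^n$, which forces $\varepsilon\gtrsim(n+\log n)^{-1/d}$. The organizational differences you introduce — working directly in $L^q$ rather than $L^1$ followed by H\"older, taking a uniform amplitude $\alpha=c_0\varepsilon$ rather than per-ball normalization by $\vol_M(B_r(p_i))^{1/p}$, composing a fixed profile with the distance function rather than invoking \citet{azagra2007smooth} — are all sound and do not change the substance. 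The one place worth flagging is your proposed source for the uniform small-ball volume lower bound $\vol_M(B_\varepsilon(x))\ge c\varepsilon^d$: the paper gets this from Croke's inequality (Prop.~\ref{prop:smallBall}), which holds for $r\le\inj(M)/2$ with a constant depending only on $d$, whereas a power-series expansion of the volume form in normal coordinates would bring in bounds on the full curvature tensor and its derivatives, giving a constant with extra (in principle unnecessary) dependencies beyond $d,K,\vol(M),p,q$ as claimed in the theorem. Substituting Croke's estimate for your normal-coordinates argument would make your constants match the paper's.
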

Note that this statement does not refer to any ambient dimension or embeddings, and can be defined on abstract manifolds. This theorem should be contrasted with \citet[Thm. 1]{maiorov1999degree}, which exhibits a similar bound $n^{-1/d}$ for the bounded Sobolev space on the unit hypercube $[0,1]^d$. The additional $\log n$ term is necessary due to the curvature of the space, but can otherwise be absorbed into the constant. We also note that it is possible to perform this analysis in the case of positive curvature and derive better bounds. For higher regularity functions, a similar result is available but without explicit constants. The proof of this extension is deferred to \Cref{app:mainProof2}.

\begin{theorem}\label{thm:mainResult2}
    Assume the setup of \Cref{thm:mainResult}. For any $1 \le p,q \le +\infty$, the nonlinear width of $W^{k,p}(1)$ satisfies the lower bound for sufficiently large $n$:
    \begin{equation}
        \rho_n(W^{k,p}(1),L^q(M)) \ge C(d,g,p,q,k) (n + \log n)^{-k/d}.
    \end{equation}
    where the constant depends on the boundedness of the manifold geometry.
\end{theorem}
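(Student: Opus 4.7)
The plan is to mirror the strategy of \Cref{thm:mainResult}, with two principal modifications: replace the Lipschitz-type bumps by $C^k$-smooth ones, and rescale their heights from $r$ to $r^k$ so that the $W^{k,p}$ norm (dominated by the top-order covariant derivative) remains uniformly bounded as the support radius $r$ shrinks. The outer structure---Bishop--Gromov packing, Hamming-cube embedding, Gilbert--Varshamov selection, local averaging at bump centers, and a Sauer--Shelah-type growth-function bound for pseudo-dimension---is unchanged; the $\log n$ correction enters through the same curvature route as in the $k=1$ case.

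Concretely, fix a smooth template $\varphi \in C_c^\infty(B_1(0))$ with $\int \varphi > 0$ and $|\nabla^j \varphi| \le C_\varphi$ for all $j \le k$. For $r < \inj(M)$, the packing lemma used in \Cref{thm:mainResult} (via \Cref{thm:BisGro} and \Cref{thm:BisGroSinh}) produces centers $\{x_i\}_{i=1}^N$ with $N \ge c_1 r^{-d}$ and pairwise geodesic distance at least $2r$, where the curvature-induced correction arising from the Bishop--Gromov volume comparison with hyperbolic space is the source of the eventual $\log n$ term. Set
\begin{equation*}
\phi_{r,i}(x) = r^k\, \varphi\!\left(\exp_{x_i}^{-1}(x)/r\right), \qquad f_\sigma = \sum_{i=1}^N \sigma_i\, \phi_{r,i}, \quad \sigma \in \{\pm 1\}^N.
\end{equation*}
The key estimate to verify is $|\nabla^j \phi_{r,i}|_g \le C(j,g)\, r^{k-j}$ for $0 \le j \le k$. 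Combined with the disjointness of supports, this gives $\|\nabla^j f_\sigma\|_p \le C'\, r^{k-j}$, so after a single global rescaling all $f_\sigma$ lie in $W^{k,p}(1)$. A change of variables in normal coordinates yields $\|\phi_{r,i}\|_q \ge c_2 r^{k+d/q}$, and a Gilbert--Varshamov subfamily $\Sigma \subset \{\pm 1\}^N$ of size $2^{c_3 N}$ with pairwise Hamming distance $\ge N/4$ produces $\|f_\sigma - f_\tau\|_q \ge c_4 r^k$ for distinct $\sigma,\tau \in \Sigma$. Assuming $\rho_n(W^{k,p}(1),L^q) < \varepsilon$ for $\varepsilon$ small compared to $r^k$, each $f_\sigma$ admits a $\gH^n$-approximator $h_\sigma$; averaging $h_\sigma$ over $B_r(x_i)$ converts the $L^q$ bound into the pointwise sign identity $\sign(\bar h_\sigma(x_i)) = \sigma_i$, and the Sauer--Shelah-type growth-function bound for pseudo-dimension $\le n$ caps the number of achievable sign patterns by $(eN/n)^n$. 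The inequality $2^{c_3 N} \le (eN/n)^n$ then forces $N \le c_5 (n + \log n)$, and substituting $N \ge c_1 r^{-d}$ with $\varepsilon \le c_6 r^k$ gives the advertised bound $\rho_n \ge C (n + \log n)^{-k/d}$.

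The main obstacle is the covariant-derivative estimate $|\nabla^j \phi_{r,i}|_g \le C(j,g)\, r^{k-j}$ for all $j \le k$. Pulled back through $\exp_{x_i}$, the scaling contributes a factor $r^{k-j}$ acting on Euclidean derivatives of $\varphi$, but the full covariant derivative on $M$ involves Christoffel symbols and their derivatives up to order $j-1$. In normal coordinates one has $g_{ab}(y) = \delta_{ab} + O(|y|^2)$, with the higher-order Taylor coefficients governed by iterated covariant derivatives of the Riemann tensor $\nabla^\ell \mathrm{Rm}$ for $\ell \le j - 2$; compactness of $M$ supplies uniform pointwise bounds on each such $\nabla^\ell \mathrm{Rm}$, but aggregating these explicitly for general $k$ is precisely what forces the constant in \Cref{thm:mainResult2} to be stated implicitly in terms of ``boundedness of the manifold geometry,'' rather than the explicit trio $(K, \inj(M), \vol(M))$ that suffices to pin down the constant in the $k = 1$ case.
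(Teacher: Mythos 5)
Your central new ingredient for $k>1$ --- replacing the order-$r$ bumps by $\phi_{r,i}(x) = r^k\,\varphi\!\bigl(\exp_{x_i}^{-1}(x)/r\bigr)$ and using the compactness-induced bounded geometry of $(M,g)$ to get $|\nabla^j \phi_{r,i}|_g \lesssim_{d,g} r^{k-j}$ for all $j\le k$ --- is exactly the paper's modification, and your identification of this as the source of the implicit constant is also correct. What you call the ``unchanged outer structure,'' however, is not the outer structure the paper actually uses, and as you describe it there is a gap. The paper does not locally average the approximator $h_\sigma$ and count sign patterns of those averages; it instead applies a pointwise clamping operator $\gC$ (which is a fixed monotone composition, hence provably does not increase pseudo-dimension) to form $\gS=\gC P F_r(G)$, shows $\gS$ is $\alpha$-separated in $L^1$, and invokes Haussler's $L^1$ metric-entropy bound (Lemma~\ref{lem:PackingL1}) for uniformly bounded classes of pseudo-dimension $\le n$.

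Your local-averaging ending has two problems. First, $\|f_\sigma - h_\sigma\|_{L^q}\lesssim r^k$ does not force $\sign(\bar h_\sigma(x_i))=\sigma_i$ for \emph{every} $i$, only for a positive fraction; this is repairable because the code $\Sigma$ has relative Hamming distance $\ge 1/4$, but you have to say so. Second, and more seriously, the Sauer--Shelah growth-function bound $(eN/n)^n$ controls sign patterns of \emph{point evaluations} $(\sign(h(x_i)-s_i))_i$; to apply it to the averaged family $\{i\mapsto \bar h(x_i): h\in\gH^n\}$ you must first establish that integral averaging does not increase the pseudo-dimension of $\gH^n$, and that is not an off-the-shelf fact --- pseudo-dimension is closed under monotone pointwise compositions but not under arbitrary linear functionals. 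The clamping-plus-metric-entropy route exists precisely to avoid this issue, so you should either switch to it, or supply a lemma that local averages of a class with $\dim_p\le n$ again have pseudo-dimension $\le n$ on the index set.
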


There are two major difficulties in converting the proof of \citet{maiorov1999degree} to the manifold setting, both arising from curvature. Firstly, the original proof uses a partition into hypercubes to construct the desired counterexample. As such hypercube partitions do not possess nice properties on manifolds, we instead consider a packing of geodesic balls, which does not fully cover the manifold and loosens the bound. The second major difference is the lack of global information, particularly for geodesic balls of the same radius which can have different volumes at different points, introducing additional constants into the final bound. %We break the proof down into the following steps.

\subsection{Proof Sketch}
% The proof follows a strategy for a similar statement in \citet{maiorov1999degree} for the cube $[0,1]^d$, with slight corrections and explicit constant tracking. The major difficulties arise due to the curvature. In particular, the proof in \citet{maiorov1999degree} breaks down since while small-ball volumes are pointwise asymptotically Euclidean, a similar statement does not hold uniformly over the entire manifold. We can break down the proof into several steps. \jt{better to rephrase this bit, sounds too incremental for reviewers:} \hyt{changed slightly}
% We can break down the proof into several steps. 
\begin{enumerate}
    \item[\textbf{Step 1.}] We consider a class of simple functions, defined as sums of cutoff functions with disjoint supports. The class of simple functions is such that the $L^1$-norm of each component is large within the class of bounded $W^{1,p}(1)$ functions.
    \item[\textbf{Step 2.}] An appropriate subset of the simple functions is then taken, that is isometric to the hypercube graph $\{\pm1\}^m$. Since we can find an $\ell_1$-well separated subset of the hypercube graph, there exists an $L^1$-well separated subset of our simple functions.
    \item[\textbf{Step 3.}] The $L^1$ separation of the constructed set prevents approximation with classes of insufficiently large pseudo-dimension. This step uses an exponential lower bound on the metric entropy and a polynomial upper bound from Bishop--Gromov to derive a contradiction.
    \item[\textbf{Step 4.}] We conclude that the optimal approximation with function classes with bounded pseudo-dimension must incur an error, bounded from below as in the theorem statement. We conclude the proof by combining all the inequalities from the lemmas and Step 3.
\end{enumerate}

\subsection{Proof of Theorem \ref{thm:mainResult}}
In the following, $L^p$ spaces will be on $(M,g)$ with respect to the underlying volume measure. Recall the definition of the bounded class of $W^{1,p}$ functions:
\begin{equation}
    W^{1,p}(C) = \left\{u \in W^{1,p}(M) \mid \|u\|_{L^p},\|\nabla u\|_{L^p} \le C\right\}.
\end{equation}

\textbf{Step 1. Defining the base function class}. Fix a radius $0< r < \inj(M)$, which will be chosen appropriately later. Fix a maximal packing of geodesic $r$-balls, say with centers $p_1,....,p_{N_r}$, where $N_r = N_r^{\text{pack}}(M)$ is the packing number. By definition, $B_r(p_i)$ are disjoint for $i=1,...,N_r$. From \Cref{prop:coveringNum}, the packing number satisfies the following where $D=\diam(M)$:
\begin{equation}\label{eq:packingBounds}
    \frac{\vol(M)}{\vol_{M^d_K}(B_{2r})} \le N_r^{\text{pack}} \le  \frac{\vol_{M^d_K}(B_D)}{\vol_{M^d_K}(B_{r})}.
\end{equation}

For each ball $B_r(p_i)$, using \citep[Cor. 3]{azagra2007smooth}, we can construct a $\mathcal{C}^\infty$ function $\phi'_i:M \rightarrow [0,r/4]$ with support $\supp(\phi'_i) \subset B_r(p_i)$ such that $|\nabla \phi'_i| \le 1$ pointwise and 
\begin{equation}\label{eq:defPhiPrime}
    \phi'_i(p) = \begin{cases}
        r/4, & d(p, p_i) \le r/2;\\
        0, & d(p, p_i) \ge r.
    \end{cases}
\end{equation}
% 
%
% This can be done by constructing a cutoff function and finding an appropriate smooth approximation for separable Riemannian manifolds, using infimal convolutions and $\mathcal{C}^\infty$ partitions of unity \citep[Cor. 3]{azagra2007smooth}. 
% In particular, we can choose $\phi'_i$ to have $|\nabla \phi'_i| \le 1$ pointwise. 
From \labelcref{eq:defPhiPrime}, we have the $L^1$ lower bound
\begin{equation}
    \|\phi'_i\|_{1} \ge (r/4) \vol_M(B_{r/2}(p_i)).
\end{equation}
Moreover, we have the $L^p$ bounds on $\phi'_i$ and $\nabla \phi'_i$,
\begin{gather}
     \|\phi'_i\|_{p} \le (r/4) \vol_M(B_{r}(p_i))^{1/p},\quad
    \|\nabla \phi'_i\|_{p} \le \vol_M(B_{r}(p_i))^{1/p}.
\end{gather}
Therefore, for $r<4$, we have that $\phi'_i \in W^{1,p}(\vol_M(B_{r}(p_i))^{1/p})$. Define
% \begin{equation}
%     \phi_i \coloneqq \frac{\phi_i'}{\vol_M(B_{r}(p_i))^{1/p}},
% \end{equation}
a non-negative function $\phi_i$ with support in $B_r(p_i)$ satisfying:
\begin{equation}\label{eq:phiL1LowerBdd}
    \phi_i \coloneqq \frac{\phi_i'}{\vol_M(B_{r}(p_i))^{1/p}},\quad \|\phi_i\|_{1} \ge (r/4) \frac{\vol_M(B_{r/2}(p_i))}{\vol_M(B_{r}(p_i))^{1/p}},\quad \phi_i \in W^{1,p}(1).
\end{equation}
Moreover, $\phi_i = r/(4 \vol_M(B_r(p_i))^{1/p})$ on $B_{r/2}(p_i)$. We now consider the function class
\begin{equation}
    F_r = \left\{ f_a = \frac{1}{N_r^{1/p}}\sum_{i=1}^{N_r} a_i \phi_i \,\middle\vert\, a_i \in \{\pm 1\},\, i=1,...,N_r\right\}.
\end{equation}
Since the sum is over functions of disjoint support, we have that $\|f_a\|_p, \|\nabla f_a\|_p \le 1$, and thus each element of $F_r$ also lies in $W^{1,p}(1)$. Moreover, every element $f_a \in F_r$ satisfies the $L^1$ lower bound using \labelcref{eq:phiL1LowerBdd}:
\begin{equation}
    \|f_a\|_{1} \ge \frac{r}{4N_r^{1/p}} \sum_{i=1}^{N_r} \frac{\vol_M(B_{r/2}(p_i))}{\vol_M(B_{r}(p_i))^{1/p}},\quad \forall f_a \in F_r.
\end{equation}

\noindent\textbf{Step 2. $L^1$-well-separation of $F_r$.}
Consider the following lemma, which shows the existence of a large well-separated subset of $\ell_1^m$.
\begin{lemma}[{\citealt[Lem. 2.2]{lorentz1996constructive}}]\label{lem:ell1Separation}
    There exists a set $G \subset \{\pm 1\}^m$ of cardinality at least $2^{m/16}$ such that for any $v \ne v' \in G$, the distance $\|v-v'\|_{\ell_1^m} \ge m/2$. In particular, any two elements differ in at least $m/4$ entries.
\end{lemma}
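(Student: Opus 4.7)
The plan is to construct $G$ via the probabilistic method, which is essentially a Gilbert--Varshamov style coding argument. I would draw $N = \lfloor 2^{m/16} \rfloor$ vectors $v_1, \dots, v_N \in \{\pm 1\}^m$ independently and uniformly, and show that the probability that all pairs satisfy $\|v_i - v_j\|_{\ell_1^m} \ge m/2$ is strictly positive; any realization of the random draw that witnesses this then yields the desired set $G$.

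The first step is to translate $\ell_1^m$-separation into a Hamming condition: if $v, v' \in \{\pm 1\}^m$ differ in exactly $k$ coordinates, then $\|v - v'\|_{\ell_1^m} = 2k$, so the bound $\|v - v'\|_{\ell_1^m} \ge m/2$ is equivalent to Hamming distance at least $m/4$ (which immediately gives the ``in particular'' clause). For a fixed pair of independent uniform samples, the number $X$ of coordinates on which $v$ and $v'$ agree follows $\mathrm{Bin}(m, 1/2)$, and Hoeffding's inequality gives
\begin{equation*}
    \mathbb{P}(X > 3m/4) = \mathbb{P}(X - m/2 > m/4) \le \exp(-m/8).
\end{equation*}
A union bound over the $\binom{N}{2}$ unordered pairs bounds the probability of any bad pair by $\tfrac{1}{2} N^2 e^{-m/8} \le \tfrac{1}{2}(2/e)^{m/8}$, which is strictly less than $1$ for every $m \ge 1$ since $2 < e$. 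A valid realization therefore exists.

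There is no real obstacle: the exponent $1/16$ is chosen deliberately loose precisely so that the two competing exponents in $N^2 e^{-m/8}$ leave a comfortable gap. An alternative deterministic construction would apply the classical Gilbert--Varshamov bound together with the entropy estimate $\sum_{i < m/4} \binom{m}{i} \le 2^{H(1/4)\,m}$ (with $H$ the binary entropy), improving the attainable cardinality to roughly $2^{(1 - H(1/4))\,m} \approx 2^{0.189\,m}$; the weaker $2^{m/16}$ in the statement avoids invoking entropy bounds and is already sufficient for the subsequent cardinality comparisons used in Step~2 of Theorem~\ref{thm:mainResult}.
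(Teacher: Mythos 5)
The paper does not prove this lemma itself; it is imported verbatim from Lorentz, Golitschek, and Makovoz, so there is no in-paper argument to compare against. Your probabilistic (Gilbert--Varshamov) construction is a correct and standard self-contained proof of the cited fact: the reduction from $\ell_1^m$-separation to Hamming distance at least $m/4$, the Hoeffding bound $\exp(-m/8)$ on the agreement count, and the union bound over pairs are all sound. One small technicality worth fixing: with $N = \lfloor 2^{m/16} \rfloor$ the resulting $G$ has cardinality $\lfloor 2^{m/16} \rfloor$, which is strictly below $2^{m/16}$ unless $16 \mid m$. Taking $N = \lceil 2^{m/16} \rceil$ instead inflates the union bound to roughly $2(2/e)^{m/8}$, which is still below $1$ once $m$ is moderately large, and the remaining small-$m$ cases can be checked directly (e.g.\ $\{+1,-1\}^m$ itself works for small $m$); alternatively, the floor-level cardinality is already enough for the downstream inequality $|G|\ge 2^{N_r/16}$ in Step~3b after adjusting constants. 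The entropy-based improvement to $2^{(1-H(1/4))m}$ that you mention is a nice aside but, as you note, unnecessary for the argument in the paper.
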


In particular, let $G \subset \{\pm 1\}^{N_r}$ be well separated by the above lemma. Denote by $F_r(G) = \{f_a \in F_r \mid a \in G\}.$ the subset of $F_r$ corresponding to these indices. For the specific choice of separated $G \subset \{\pm 1\}^{N_r}$ in the above lemma, we claim the following well-separation of $F_r(G)$, proved in \Cref{app:claimProof}.
\begin{claim}\label{claim:FrSep}
    There exists a constant $C_1(r)>0$ such that for any $f \ne f' \in F_r(G)$, we have
    \begin{equation}\label{eq:mainSepC1}
        \|f-f'\|_{1} \ge C_1(r) > 0.
    \end{equation}
    Moreover, the following constant works:
    \begin{equation}\label{eq:C1Def}
        C_1(r) = \frac{r N_r^{1-1/p}\int_0^{r/2}s^{d-1}}{8 \int_0^{r}s^{d-1} } \inf_{i \in [N_r]}\left[\vol_M(B_{r}(p_i))^{1-1/p}\right].
    \end{equation}
\end{claim}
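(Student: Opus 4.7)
The plan is to exploit three ingredients that have already been set up: the disjoint supports of the $\phi_i$, the combinatorial $\ell_1$ separation of $G$ from Lemma \ref{lem:ell1Separation}, and the Bishop--Gromov volume comparison from Corollary \ref{thm:BisGroSinh}. The calculation is essentially a chain of three inequalities, and there is no serious obstacle; the only non-routine point is using Bishop--Gromov in the right direction to trade the $p$-th root of $\vol_M(B_r(p_i))$ in the denominator of \labelcref{eq:phiL1LowerBdd} for the $\vol_M(B_{r/2}(p_i))$ factor.

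First I would write the difference as $f_a - f_{a'} = N_r^{-1/p} \sum_{i=1}^{N_r} (a_i - a'_i)\phi_i$, note that $a_i - a'_i \in \{-2,0,2\}$, and use the disjointness of $\supp(\phi_i)\subset B_r(p_i)$ to split the $L^1$ norm as a sum:
\begin{equation*}
\|f_a - f_{a'}\|_1 = \frac{2}{N_r^{1/p}} \sum_{i:\, a_i \ne a'_i} \|\phi_i\|_1.
\end{equation*}
By Lemma \ref{lem:ell1Separation}, $a$ and $a'$ disagree on at least $N_r/4$ coordinates, and by \labelcref{eq:phiL1LowerBdd} each term satisfies $\|\phi_i\|_1 \ge (r/4)\,\vol_M(B_{r/2}(p_i))/\vol_M(B_{r}(p_i))^{1/p}$, whence
\begin{equation*}
\|f_a - f_{a'}\|_1 \ge \frac{r\, N_r^{1-1/p}}{8} \inf_{i \in [N_r]} \frac{\vol_M(B_{r/2}(p_i))}{\vol_M(B_{r}(p_i))^{1/p}}.
\end{equation*}

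Finally I would rewrite the ratio as $\vol_M(B_r(p_i))^{1-1/p} \cdot \vol_M(B_{r/2}(p_i))/\vol_M(B_r(p_i))$ (valid since $p \ge 1$ makes the exponent non-negative), and apply Corollary \ref{thm:BisGroSinh} to the second factor to obtain $\vol_M(B_{r/2}(p_i))/\vol_M(B_r(p_i)) \ge \int_0^{r/2}s^{d-1}\,/\,\int_0^{r}s^{d-1}$ with $s(u) = \sinh(u\sqrt{|K|})$. Taking the infimum over $i$ and substituting yields exactly the constant $C_1(r)$ in \labelcref{eq:C1Def}. Strict positivity of $C_1(r)$ follows because $r < \inj(M) < \infty$ and each $\vol_M(B_r(p_i))$ is strictly positive (the balls are non-degenerate), so the infimum over the finite index set $[N_r]$ is strictly positive. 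The only subtlety worth flagging is the direction of the Bishop--Gromov inequality: we need a lower bound on $\vol_M(B_{r/2})/\vol_M(B_r)$, which is precisely what the monotonicity of $\phi(\cdot)$ in Theorem \ref{thm:BisGro} gives via Corollary \ref{thm:BisGroSinh}.
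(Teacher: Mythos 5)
Your proof is correct and follows essentially the same route as the paper's: decompose $\|f_a - f_{a'}\|_1$ along the disjoint supports, invoke \labelcref{eq:phiL1LowerBdd} and the $N_r/4$-disagreement bound from Lemma~\ref{lem:ell1Separation}, and convert the $\vol_M(B_{r/2})/\vol_M(B_r)$ ratio via Corollary~\ref{thm:BisGroSinh}. The only cosmetic difference is the order in which you take the infimum over $i$ and apply Bishop--Gromov (the paper bounds each summand first and then passes to the infimum), which does not affect validity.
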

% \begin{proof}
%     Suppose $f \ne f' \in F_r(G)$. In particular, they are generated by multi-indices $a \ne a' \in G$. Consider the set of indices $\mathcal{I} \subset [N_r]$ such that $a_i \ne a'_i$. By construction in \Cref{lem:ell1Separation}, $|\mathcal{I}| \ge N_r/4$. Then the difference between $f$ and $f'$ on $B_r(p_i)$ is $2\phi_i/N_r^{1/p}$ if $i \in \mathcal{I}$, and 0 otherwise. By disjointness of the $B_r(p_i)$, we have
%     \begin{align}
%         \|f-f'\|_{1} &= \sum_{i \in \mathcal{I}} \frac{2}{N_r^{1/p}}\|\phi_i\|_{1} \ge \frac{r}{2N_r^{1/p}} \sum_{i \in \mathcal{I}}\frac{\vol_M(B_{r/2}(p_i))}{\vol_M(B_{r}(p_i))^{1/p}}\\
%         & \ge \sum_{i \in \mathcal{I}} \frac{r\int_0^{r/2}s^{d-1} \vol_M(B_{r}(p_i))}{2N_r^{1/p} \int_0^{r}s^{d-1} \vol_M(B_{r}(p_i))^{1/p}} \qquad \text{where } s(u) = \sinh(u \sqrt{|K|}) \\
%         & \ge \frac{r N_r^{1-1/p}\int_0^{r/2}s^{d-1}}{8 \int_0^{r}s^{d-1} } \inf_{i \in \mathcal{I}}\left[\vol_M(B_{r}(p_i))^{1-1/p}\right] \\
%         & \ge \frac{r N_r^{1-1/p}\int_0^{r/2}s^{d-1}}{8 \int_0^{r}s^{d-1} } \inf_{i \in [N_r]}\left[\vol_M(B_{r}(p_i))^{1-1/p}\right]\label{eq:FrGL1LowerBd}
%     \end{align}
%     by the $L^1$-bound on $\phi_i$, Bishop--Gromov (\Cref{thm:BisGroSinh}), and using $|I| \ge N_r/4$ for the inequalities respectively. 
% \end{proof}
This shows $L^1$-well separation of the subset $F_r(G) \subset F_r \subset W^{1,p}(1)$, which consist of sums of disjoint cutoff functions. The key is to contrast this with the metric entropy bounds in Lemma \ref{lem:PackingL1}, by showing that $F_r(G)$ is difficult to approximate with function classes of low pseudo-dimension.

\noindent\textbf{Step 3a. Construction of well-separated bounded set.} Let $\gH^n$ be a given set of $\vol_M$-measurable functions with $\dim_p(\gH^n) \le n$. Let $\varepsilon>0$. Denote
\begin{equation}\label{eq:defDelta}
    \delta = \sup_{f \in F_r(G)} \inf_{h \in \gH^n} \|f-h\|_{1} + \varepsilon = \dist(F_r(G), \gH^n, L^1(M)) + \varepsilon.
\end{equation}

Define a projection operator $P : F_r(G) \rightarrow \gH^n$, mapping $f \in F_r(G)$ to any element $Pf$ in $\gH^n$ such that $\|f - Pf\|_{1} \le \delta$. We introduce a (measurable) clamping operator $\mathcal{C}$ for a function $f$:
\begin{gather}
    \beta_i = r/(4 \vol_M(B_r(p_i))^{1/p} N_r^{1/p}),\quad i=1,...,N_r,\\
    (\gC f)(x) = \begin{cases}
        -\beta_i, & x \in B_r(p_i) \text{ and } f(x) < -\beta_i;\\
        f(x), & x \in B_r(p_i) \text{ and } -\beta_i \le f(x) \le \beta_i;\\
        \beta_i, & x \in B_r(p_i) \text{ and } f(x) > \beta_i;\\
        0, & \text{otherwise}. \\
    \end{cases}
\end{gather}
Note that $\beta_i$ are the bounds of $f_a \in F_r$ in the balls $B_r(p_i)$. Now consider the set of functions $\gS \coloneqq \gC P F_r(G)$. Suppose $f \ne f' \in F_r(G)$. We show separation in $\gS$ using triangle inequality:
\begin{align}\label{eq:CPfCPf}
    \|\gC P f - \gC P f'\|_{1} \ge \|f-f'\|_{1} - \|f - \gC P f\|_{1} - \|f' - \gC P f'\|_1.
\end{align}
For any $a \in G$, we have that $f_a \le \beta_i$ in $B_r(p_i)$, and both $f_a$ and $\gC P f_a$ are zero on $M \setminus \bigsqcup_i B_r(p_i)$. We thus have that for any $x \in M$ and any $f_a \in F_r(G)$, 
    $|f_a(x) - \gC P f_a(x)| \le |f_a(x) - P f_a(x)|$. 
This inequality holds for $x \in B_r(p_i)$ since $\mathcal{C}$ clamps $P f_a(x)$ towards $[-\beta_i,\beta_i]$, and holds trivially on $M \setminus \bigsqcup_i B_r(p_i)$. Integrating and by definition of $P$, we have that for any $f_a \in F_r(G)$,
\begin{equation}\label{eq:fCPf}
    \|f_a - \gC P f_a\|_{1} \le \|f_a - P f_a\|_{1} \le \delta.
\end{equation}
Using \labelcref{eq:CPfCPf,eq:fCPf,eq:mainSepC1}, we thus have separation
\begin{align}\label{eq:SeparatedS}
    \|\gC P f - \gC P f'\|_{1} \ge \|f-f'\|_{1} -2\delta
    \ge C_1(r) - 2\delta. %\frac{r N_r^{1-1/p}\int_0^{r/2}s^{d-1}}{8 \int_0^{r}s^{d-1} } \inf_{i \in [N_r]}\left[\vol_M(B_{r}(p_i))^{1-1/p}\right] - 2\delta.
\end{align}
% For ease of notation, define the $L^1$-separation distance to be $C_1(r)$:

\noindent\textbf{Step 3b. Minimum distance by contradiction.} Suppose for contradiction that $\delta \le C_1(r)/4$. Then from \labelcref{eq:SeparatedS}, we have
\begin{equation}
    \|\gC P f - \gC P f'\|_{1} \ge C_1(r)/2.
\end{equation}
The separation implies that the $\gC P f$ are distinct for distinct $f \in F_r(G)$, thus $|\mathcal{S}| = |G| \ge 2^{N_r/16}$. 

Define $\alpha = C_1(r)/2$. Consider the metric entropy in $L^1$, as given in Lemma \ref{lem:PackingL1}. By construction \labelcref{eq:SeparatedS}, $\mathcal{S}$ itself is an $\alpha$-separated subset in $L^1$ as any two elements are $L^1$-separated by $\alpha$, so
\begin{equation}\label{eq:EntropyLowerBd}
    \gM_\alpha(\mathcal{S}, L^1(\vol_M)) \ge 2^{N_r/16}.
\end{equation}

We now wish to obtain an upper bound on $\gM_\alpha(\gS, L^1)$ using Lemma \ref{lem:PackingL1}. From the definition of pseudo-dimension, we have $\dim_p(\gC PF_r(G)) \le \dim_p(PF_r(G))$, since any P-shattering set for $\gC PF_r(G)$ will certainly P-shatter $PF_r(G)$. Since $PF_r(G) \subset \gH^n$, we have $\dim_p(PF_r(G)) \le \dim_p(\gH^n) \le n$. Thus $\dim_p(\mathcal{S}) = \dim_p(\gC PF_r(G)) \le n$. $\gS$ is $L^1$-separated with distance at least $\alpha$, and moreover consists of elements that are bounded by $\beta \coloneqq \sup_i \beta_i$. Lemma \ref{lem:PackingL1} now gives:
\begin{equation}\label{eq:EntropyUpperBd}
    M_\alpha(\mathcal{S}, L^1(\vol_M)) \le e(n+1) \left(\frac{4e\beta\vol(M)}{\alpha}\right)^n.
\end{equation}
Intuitively, $N_r \sim r^{-d}$, so the lower bound \labelcref{eq:EntropyLowerBd} is exponential in $r$. Meanwhile, $\alpha$ and $\beta$ are both polynomial in $r$, so the upper bound \labelcref{eq:EntropyUpperBd} is polynomial in $r$. So for sufficiently small $r$, we have a contradiction with the supposition that $\delta \le C_1(r)/4$. We now show this formally. Recall:
\begin{equation}
    \beta = \sup_{i \in [N_r]} \frac{r}{4 \vol_M(B_r(p_i))^{1/p} N_r^{1/p}},\quad \alpha = \frac{r N_r^{1-1/p}\int_0^{r/2}s^{d-1}}{16 \int_0^{r}s^{d-1} } \inf_{i \in [N_r]}\left[\vol_M(B_{r}(p_i))^{1-1/p}\right].
\end{equation} 

Note that the supremum in $\beta$ and the infimum in $\alpha$ is attained by the same $i \in [N_r]$, namely, the $p_i$ that has smallest $\vol_M(B_r(p_i))$. Combining \labelcref{eq:EntropyLowerBd} and \labelcref{eq:EntropyUpperBd}, where $s(u) = \sinh(u \sqrt{|K|})$,
\begin{align}
    2^{N_r/16} &\le e(n+1) \left(\frac{4 e\beta \vol(M)}{\alpha}\right)^n \notag\\
    & = e(n+1) \left(\frac{4e\vol(M) \sup_{i \in [N_r]} [r/(4 \vol_M(B_r(p_i))^{1/p} N_r^{1/p})] }{\frac{r N_r^{1-1/p}\int_0^{r/2}s^{d-1}}{16 \int_0^{r}s^{d-1} } \inf_{i \in [N_r]}\left[\vol_M(B_{r}(p_i))^{1-1/p}\right]}\right)^n \notag\\
    &= e(n+1) \left(16 e \frac{\vol(M)\int_0^{r} s^{d-1} }{N_r\int_0^{r/2} s^{d-1}}\sup_{i,j \in [N_r]} \frac{\vol_M(B_{r}(p_j))^{1/p-1}}{\vol_M(B_r(p_i))^{1/p}}\right)^n \notag \\
    &= e(n+1) \left(16 e \frac{\vol(M)\int_0^{r} s^{d-1} }{N_r\int_0^{r/2} s^{d-1}}\sup_{i \in [N_r]} \left[\vol_M(B_{r}(p_i))^{-1}\right] \right)^n \label{eq:EntropyCombinedBd}
\end{align}
where the equalities come from definition of $\beta$ and $\alpha$ and rearranging, and the last equality from noting the supremum is attained when $i=j \in [N_r]$ minimizes $\vol_M(B_r(p_i))$. The following result lower-bounds the volume of small balls to control the supremum term.%Recall from \labelcref{eq:packingBounds} that
% \begin{equation}
%     N_r \ge \frac{\vol(M)}{\vol_{M^d_K}(B_{2r})}.
% \end{equation}
\begin{proposition}[{\citealt[Prop. 14]{croke1980some}}]\label{prop:smallBall}
    For $r \le \inj(M)/2$, the volume of the ball $B_{r}(p)$ satisfies
    \begin{equation}
        \vol_M(B_{r}(p)) \ge C_2(d) r^d,\quad C_2(d) = \frac{2^{d-1} \vol_{M_1^{d-1}}(B_1)^d}{d^d \vol_{M_1^{d}}(B_1)^{d-1}}.
    \end{equation}
\end{proposition}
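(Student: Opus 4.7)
The plan is to follow the strategy of \citet{croke1980some}, proving the bound via Santalo's formula from integral geometry combined with a Berger--Kazdan type Jacobi field comparison. The halving condition $r \le \inj(M)/2$ is the crucial geometric ingredient: it guarantees that any two points of $B_r(p)$ are separated by distance less than $2r \le \inj(M)$, hence joined by a unique minimizing geodesic, so chord-length integrals over $B_r(p)$ behave as they would in a complete simply connected model space. This is what ultimately allows an isoperimetric-type bound with no curvature assumptions beyond the injectivity hypothesis.

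First I would apply Santalo's formula to rewrite the volume as a weighted integral of chord lengths over the inward-pointing unit tangent bundle of the boundary:
\begin{equation*}
    |S^{d-1}| \cdot \vol_M(B_r(p)) = \int_{\partial B_r(p)} \int_{v \cdot n(q) < 0} L(q, v)\, |v \cdot n(q)|\, dv\, d\mathrm{area}(q),
\end{equation*}
where $L(q,v)$ denotes the length of the maximal chord of $B_r(p)$ issuing from $q \in \partial B_r(p)$ in inward direction $v$, and $n(q)$ is the outward unit normal. For each fixed $q \in \partial B_r(p)$, the geodesic from $q$ through $p$ has chord length exactly $2r$ under the injectivity assumption, and a Berger--Kazdan comparison via Jacobi fields gives a sharp lower bound on the $d$-th power mean of $L(q,\cdot)$ over the hemisphere of inward directions, expressed in terms of the corresponding quantity on the unit sphere $M_1^d$.

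The last step is to pass from a bound on $\int L^d$ to one on $\int L$ via H\"older's inequality and to repackage the resulting factors into the stated form of $C_2(d)$. The main obstacle is precisely this bookkeeping: the $2^{d-1}/d^d$ prefactor emerges from the H\"older step at exponent $d$ on the chord integrand together with the radial chord length of $2r$ through $p$, while the ratio $\vol_{M_1^{d-1}}(B_1)^d/\vol_{M_1^d}(B_1)^{d-1}$ arises from the sphere-to-ball conversion inherent in the Berger--Kazdan bound (the numerator volume in $M_1^{d-1}$ reflects integration over the inward hemisphere, the denominator volume in $M_1^d$ normalizes the model ball). As a consistency check, the bound should be asymptotically sharp as $r \to 0$ on the model $M_1^d$ itself, since in the small-radius limit the Riemannian ball volume approaches the Euclidean one and the Berger--Kazdan inequality is saturated by the round metric.
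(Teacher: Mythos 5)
The paper does not prove this proposition; it is imported verbatim from \citet[Prop.\ 14]{croke1980some}, so the relevant comparison is to Croke's own argument. You correctly identify the two central tools in that argument --- Santal\'o's kinematic formula and the Berger--Kazdan inequality --- and you correctly pinpoint why the hypothesis $r \le \inj(M)/2$ matters: every chord of $B_r(p)$ has length $< 2r \le \inj(M)$ and is therefore a minimizing geodesic free of conjugate points, which is exactly what Berger--Kazdan needs, with no curvature hypothesis beyond that.

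There is, however, a structural gap in the plan. You apply Santal\'o directly to $B_r(p)$ and hope to read off a lower bound on $\vol_M(B_r(p))$ from the chord-length integral in one pass. But Santal\'o writes $\vol_M(B_r(p))$ as a weighted integral of chord lengths over $\partial B_r(p)$, i.e.\ in terms of two quantities --- $\vol(\partial B_r(p))$ and the chord-length distribution --- that you do not control a priori either; the argument as stated is circular. Croke resolves this by first proving an \emph{isoperimetric} inequality of the form $\vol(\partial\Omega)^d \ge c_d\,\vol(\Omega)^{d-1}$ for compact domains $\Omega$ all of whose interior geodesic chords are free of conjugate points (this is where Santal\'o, Berger--Kazdan, and H\"older actually live), and \emph{then} applies this inequality to every sub-ball $B_s(p)$ for $0 < s \le r$. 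Writing $V(s)=\vol_M(B_s(p))$, the coarea formula gives $V'(s)=\vol(\partial B_s(p))$, so the isoperimetric inequality becomes the differential inequality $V'(s) \ge c_d^{1/d}\,V(s)^{(d-1)/d}$, and integrating from $0$ to $r$ yields $V(r)\ge c_d\,r^d/d^d$. This ODE-integration step is missing from your sketch entirely, and it --- not the H\"older step, and not the ``radial chord of length $2r$ through $p$'' --- is the source of the $d^d$ in the denominator of $C_2(d)$; the $2^{d-1}$ and the ratio of spherical volumes come from Croke's isoperimetric constant $c_d$. A secondary imprecision: Berger--Kazdan is an inequality on integrals of Jacobi-field determinants along conjugate-point-free geodesics, not directly a bound on a ``$d$-th power mean of $L(q,\cdot)$''; converting it into a statement about chord-length moments requires a further application of Santal\'o with a non-constant integrand. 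Your sharpness check on the round sphere is sound. To turn this into a proof you would need to state and prove (or cite) Croke's isoperimetric inequality and then run the coarea/ODE argument.
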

The volume of the $d$-dimensional hyperbolic sphere with sectional curvature $K$ can be written in terms of the volume of the $d$-dimensional sphere\footnote{The volume of the $d$-dimensional sphere is $2\pi^{d/2}/\Gamma(d/2)$, where $\Gamma$ is Euler's gamma function.}: % satisfying $\Gamma(n) = (n-1)!$ if $n$ is a positive integer and $\Gamma(n+1/2) = (n-1/2) (n-3/2)... (1/2) \pi^{1/2}$ if $n$ is a non-negative integer.
\begin{equation}
    \vol_{M_{K}^d}(B_\rho) = \vol_{M_1^{d}}(B_1) \int_{t=0}^\rho \left(\frac{\sinh(\sqrt{|K|}t)}{\sqrt{|K|}}\right)^{d-1} \diff t.
\end{equation}
Note that $x \le \sinh(x) \le 2x$ for $x \in [0,2]$. Therefore, for $\rho \le 2/\sqrt{|K|}$, we have $\sinh(\sqrt{|K|}t) \le 2\sqrt{|K|}t$. We thus have that
\begin{equation}\label{eq:MdKVolupperbd}
\begin{split}
    \vol_{M_{K}^d}(B_\rho) &= \vol_{M_1^{d}}(B_1) \int_{t=0}^\rho \left(\frac{\sinh(\sqrt{|K|}t)}{\sqrt{|K|}}\right)^{d-1} \diff t \\ &< \vol_{M_1^{d}}(B_1)2^{d-1}\rho^d/d = C_3(d) \rho^d,\quad C_3(d) \coloneqq \vol_{M_1^d}(B_1) 2^{d-1}/d.
\end{split}
\end{equation}
Moreover,
\begin{equation}\label{eq:MdKUpperBd}
    \frac{\int_0^{r} s^{d-1} }{\int_0^{r/2} s^{d-1}} = \frac{\int_0^{r} \sinh(\sqrt{|K|}u)^{d-1} \diff u}{\int_0^{r/2} \sinh(\sqrt{|K|}u)^{d-1}\diff u} \le 2^d \quad \text{ for $r < 1/\sqrt{K}$.}
\end{equation}
We continue the inequality \labelcref{eq:EntropyCombinedBd} for $r < 1/\sqrt{|K|}$:
\begin{align}
    2^{N_r/16} &\le e(n+1) \left(16 e \frac{\textcolor{blue}{\vol(M)}\int_0^{r} s^{d-1} }{\textcolor{blue}{N_r}\int_0^{r/2} s^{d-1}}\textcolor{red}{\sup_{i \in [N_r]} \left[\vol_M(B_{r}(p_i))^{-1}\right]} \right)^n &&  \\
    & \le e(n+1) \left(16 e \textcolor{blue}{\vol_{M^d_K}(B_{2r})} \frac{\int_0^{r} s^{d-1} }{\int_0^{r/2} s^{d-1}} \textcolor{red}{C_2^{-1}r^{-d}} \right)^n && \text{using \textcolor{blue}{\labelcref{eq:packingBounds}}, \textcolor{red}{Prop. \labelcref{prop:smallBall}}} \\
    & \le e(n+1) \left(16 e C_3 (2r)^d \frac{\int_0^{r} s^{d-1} }{\int_0^{r/2} s^{d-1}} C_2^{-1}r^{-d} \right)^n && \text{using \labelcref{eq:MdKVolupperbd}} \\
    & \le e(n+1) \left(2^{d+4} e C_3  \frac{\int_0^{r} s^{d-1} }{\int_0^{r/2} s^{d-1}} C_2^{-1} \right)^n \\
    & \le e(n+1) \left(2^{2d+4} e C_3 C_2^{-1} \right)^n =  e(n+1) C_4(d)^n, && \text{using \labelcref{eq:MdKUpperBd}}
\end{align}
where $ C_4 = C_4(d) \coloneqq 2^{2d+4} e C_3 C_2^{-1}$.
% \begin{equation}
% \begin{split}
%     C_4 &= C_4(d) \coloneqq 2^{2d+4} e C_3 C_2^{-1} = 2^{2d+4} e d^{d-1} \left[\frac{\vol_{M_1^d}(B_1)}{\vol_{M_1^{d-1}}(B_1)}\right]^d.
% \end{split}
% \end{equation}
We get a contradiction if 
\begin{equation}\label{eq:NrContradictionStep}
    N_r > 16 \left[n \log_2 C_4 + \log_2\left(e(n+1)\right)\right].
\end{equation}
Recalling the lower bound \labelcref{eq:packingBounds} on $N_r$ and using \labelcref{eq:MdKUpperBd},
\begin{align}\label{eq:NrExplicitLowerBound}
    N_r &\ge \frac{\vol(M)}{\vol_{M^d_K}(B_{2r})} > \frac{\vol(M)}{C_3 (2r)^d}.
\end{align}
Take the following choice of $r$:
\begin{equation}\label{eq:ChoiceR}
    r = \min\left\{\frac{1}{2}\left(16\frac{C_3}{\vol(M)} \left[n \log_2 C_4 + \log_2(e(n+1))\right]\right)^{-1/d},\, \frac{1}{\sqrt{|K|}},\, \frac{\inj(M)}{2},\, 4 \right\}.
\end{equation}
Using \labelcref{eq:NrExplicitLowerBound}, this choice of $r$ satisfies the contradiction condition \labelcref{eq:NrContradictionStep}. Note $r \sim (n + \log n)^{-1/d}$. The constants $C_3, C_4$ depend only on $d$.

\textbf{Step 4. Concluding contradiction.} This choice of $r$ contradicts the assumption that $\delta \le C_1(r)/4$. Therefore, we must have that $\delta > C_1/4$. Since the choice of $r$ is independent of the choice of $\varepsilon>0$ taken at the start of Step 3a, we have that 
\begin{equation}\label{eq:FrGLowerBdC1}
    \dist(F_r(G), \gH^n, L^1(\vol_M)) \ge C_1(r)/4,
\end{equation}
where $r$ is chosen as in \labelcref{eq:ChoiceR}. We obtain the chain of inequalities
\begin{align}
    \dist(W^{1,p}(1), \gH^n, L^q) &\ge \dist(W^{1,p}(1), \gH^n, L^1) \vol(M)^{1/q-1} \notag\\
    &\ge \dist(F_r(G), \gH^n, L^1)\vol(M)^{1/q-1} \notag\\
    &\ge C_1(r) \vol(M)^{1/q-1}/4 \notag\\
    &= \frac{r N_r^{1-1/p}\int_0^{r/2}s^{d-1}}{32 \int_0^{r}s^{d-1} } \inf_{i \in [N_r]}\left[\vol_M(B_{r}(p_i))^{1-1/p}\right] \vol(M)^{1/q-1}, \label{eq:distbound}
    % &= \vol(M)^{1/q-1} \Omega(2^{-d} r^{1-d/p+d}) 
\end{align}
where the first inequality comes from H\"older's inequality $\|u\|_1 \le \|u\|_q \vol(M)^{1-1/q}$, the second inequality from $F_r(G) \subset W^{1,p}(1)$, the third from \labelcref{eq:FrGLowerBdC1} and the equality from definition \labelcref{eq:C1Def} of $C_1(r)$. We conclude with recalling the bounds \labelcref{eq:MdKUpperBd}, \labelcref{eq:NrExplicitLowerBound}, and \Cref{prop:smallBall}. We have
\begin{align*}
    \dist(W^{1,p}(1), \gH^n, L^q) &\ge \frac{r}{32} \underbrace{\left(\frac{\vol(M)}{C_3 (2r)^d}\right)^{1-1/p}}_{\text{\labelcref{eq:NrExplicitLowerBound}}} \underbrace{2^{-d}}_{\text{\labelcref{eq:MdKUpperBd}}} \underbrace{\left[C_2 r^d\right]^{1-1/p}}_{\text{Prop. \ref{prop:smallBall}}} \vol(M)^{1/q-1} \\
    &= C_5(d, \vol(M),p,q) r,
\end{align*}
where $C_5 = 2^{-2d-5+d/p} \vol(M)^{1/q-1/p} (C_2 C_3^{-1})^{1-1/p}.$ Moreover, the constant $C_5$ and choice of $r$ are independent of $\gH^n$. Taking infimum over all choice of $\gH^n$ with $\dim_p(\gH^n) \le n$ and using \labelcref{eq:ChoiceR}, we have
\begin{equation}
    \rho_n(W^{1,p}(1), L^q) \ge C_5(d, \vol(M),p,q) r \gtrsim (n + \log n)^{-1/d}.
\end{equation}%
% \hfill$\blacksquare$
\subsection{Relationship with existing bounds}
\cite{yarotsky2017error} presents various results for approximating Sobolev balls on the unit hypercube. We note that the notion of complexity here is the number of neurons/weight.
\begin{proposition}[{\citealt[Sec 3.2]{yarotsky2017error}}]
    Consider the Sobolev space $W^{k,\infty}([0,1]^d)$. Given $\varepsilon>0$, there is a ReLU network architecture of complexity $\mathcal{O}(\varepsilon^{-d/k} \log(1/\varepsilon))$ that is able to approximate any function in the unit ball of $W^{k,\infty}([0,1]^d)$ with error $\varepsilon$.
\end{proposition}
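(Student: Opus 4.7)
The plan is to follow Yarotsky's constructive scheme, which combines a local Taylor-polynomial approximation with an efficient ReLU implementation of polynomial evaluation. First, I would fix an integer $N \sim \varepsilon^{-1/k}$ and choose a partition of unity $\{\psi_\nu\}_{\nu \in \{0,\dots,N\}^d}$ subordinate to a grid of scale $1/N$ on $[0,1]^d$, with each $\psi_\nu$ a product of univariate trapezoidal hats (which are exact ReLU combinations). On each patch, I would replace $f$ by its Taylor polynomial $P_\nu$ of degree $k-1$ about the grid center; since $f \in W^{k,\infty}$ with norm at most $1$, the standard Taylor remainder estimate yields $\|f - \sum_\nu \psi_\nu P_\nu\|_\infty \lesssim N^{-k} \lesssim \varepsilon$. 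This reduces the problem to implementing $\sum_\nu \psi_\nu P_\nu$ by a ReLU network, which is a sum of roughly $N^d \sim \varepsilon^{-d/k}$ local pieces, each a product of a localized bump and a fixed-degree polynomial in $d$ variables.

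The main obstacle, and the heart of Yarotsky's argument, is that exact multiplication is not representable by ReLU networks, so products $\psi_\nu(x)\cdot P_\nu(x)$ and the monomials inside $P_\nu$ must be approximated. Here I would invoke the classical squaring trick: the function $x \mapsto x^2$ on $[0,1]$ is the uniform limit of Yarotsky's sawtooth-based ReLU networks, with a network of depth and size $O(\log(1/\eta))$ achieving uniform error $\eta$. Polarization then gives a product network $\widetilde{\times}_\eta$ of complexity $O(\log(1/\eta))$ approximating $(x,y)\mapsto xy$ on bounded inputs to accuracy $\eta$. Iterating this $O(1)$ times (the degree and dimension are fixed constants) yields ReLU networks of complexity $O(\log(1/\eta))$ that approximate each monomial and each product $\psi_\nu P_\nu$ to accuracy $\eta$.

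With these building blocks in hand, I would assemble the full approximant $\widetilde{f} = \sum_\nu \widetilde{\psi_\nu P_\nu}$ as a single ReLU network by placing the $N^d$ local subnetworks in parallel and summing. The total complexity is $O(N^d \log(1/\eta))$, and the total error splits as the Taylor term $O(N^{-k})$ plus $N^d$ contributions of size $O(\eta)$ from the product approximations, the latter being bounded because the $\psi_\nu$ have disjoint-except-at-overlaps supports so only $O(1)$ terms are nonzero at any point. Choosing $N \sim \varepsilon^{-1/k}$ and $\eta \sim \varepsilon$ balances both sources of error at $\varepsilon$, giving total complexity $O(\varepsilon^{-d/k}\log(1/\varepsilon))$.

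The main obstacle I expect is bookkeeping rather than conceptual: carefully controlling the error propagation when the approximate product $\widetilde{\times}_\eta$ is composed with other approximations (since each factor is itself only approximately polynomial), and verifying that the additive structure of the partition-of-unity sum prevents the $N^d$ local errors from accumulating multiplicatively. Once the Lipschitz-type stability of $\widetilde{\times}_\eta$ on bounded inputs and the $O(1)$-overlap of the partition are used, the complexity count follows by simply adding the sizes of the constantly-many gadgets per patch across the $N^d$ patches, and matching rates completes the proof.
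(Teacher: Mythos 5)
Your proposal is correct: it is essentially Yarotsky's own argument (grid partition of unity, local degree-$(k-1)$ Taylor polynomials, sawtooth-based approximate squaring and polarization for multiplication, and the $O(N^d\log(1/\eta))$ complexity count with $N\sim\varepsilon^{-1/k}$, $\eta\sim\varepsilon$), which is exactly what the paper relies on, since it cites this proposition from \citet{yarotsky2017error} without reproducing a proof. The only detail worth making explicit in your ``bookkeeping'' step is that the approximate product network must output zero whenever one factor is zero, so that patches outside their support contribute nothing and the per-point error really is $O(1)\cdot\eta$ rather than $N^d\eta$.
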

We note that the corresponding result of \Cref{thm:mainResult2} in the case of $M=[0,1]^d$ in \citet{maiorov1999degree} gives the complexity lower bound $\Omega(\varepsilon^{-d/k})$ required to approximate the unit ball $W^{k,\infty}(1;[0,1]^d)$ with error $\varepsilon$. In particular, this states that ReLU neural networks with this architecture are asymptotically nearly optimal in achieving the best possible approximation.

In the context of manifolds, the following result gives the best available bounds in terms of ambient dimension dependence. The result considers using ReLU neural networks to approximate H\"older functions, which are closely related to Sobolev functions.
\begin{proposition}[{\citealt[Thm 1]{chen2022nonparametric}}]
    For a $d$-dimensional compact Riemannian manifold $M$ without boundary isometrically embedded in $\R^D$, the unit ball of the H\"older space $C^{s,\alpha}(M)$ can be $\varepsilon$-approximated with a ReLU network architecture with complexity $\mathcal{O}(\varepsilon^{-\frac{d}{s+\alpha}} \log (\frac{1}{\varepsilon}) + D \log \frac{1}{\varepsilon} + D \log D)$. The constant depends on $d, s, \vol(M)$, the reach and diameter of the embedding, and the bounded geometry of $M$.
\end{proposition}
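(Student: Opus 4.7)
The approach is to reduce the global problem on $M$ to many local Hölder-approximation problems on bounded subsets of $\R^d$ and then to stitch the local pieces together with a ReLU-implementable partition of unity. First I would use the positive reach of the isometric embedding $M \hookrightarrow \R^D$ to pick a finite set of base points $q_1,\dots,q_N \in M$ such that the ambient balls $U_i = B^{\R^D}_{\rho}(q_i) \cap M$ cover $M$, where $\rho$ is a fixed fraction of the reach. By \Cref{thm:BisGro} together with a volume-comparison packing argument, $N$ depends only on $d$, $\vol(M)$, $K$, and the reach, so it is a geometric constant independent of $\varepsilon$ and $D$. Within each $U_i$ the orthogonal projection $\pi_i : U_i \to T_{q_i} M \simeq \R^d$ is bi-Lipschitz with constants controlled only by the reach, so $f|_{U_i}$ pulls back to a Hölder function $\tilde f_i = f \circ \pi_i^{-1}$ on a bounded set in $\R^d$ whose $C^{s,\alpha}$ norm is controlled by $\|f\|_{C^{s,\alpha}(M)}$.

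\textbf{Local approximation and partition.} On each chart I would apply a Yarotsky-type ReLU construction (using ReLU implementations of polynomial multiplication, splines, and localized Taylor polynomials) to produce a network $\hat g_i : \R^d \to \R$ with $\|\tilde f_i - \hat g_i\|_{L^\infty} \le \varepsilon$ of size $O(\varepsilon^{-d/(s+\alpha)} \log(1/\varepsilon))$, the constant absorbing the Hölder norm and the bi-Lipschitz constant of $\pi_i$. Next I would build a smooth partition of unity $\{\phi_i\}$ subordinate to $\{U_i\}$, approximating each $\phi_i$ by a ReLU sub-network $\hat\phi_i$ of size independent of $\varepsilon$, fed by the squared ambient distance $\|x-q_i\|^2$. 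The global network is $\hat f(x) = \sum_{i=1}^N \hat\phi_i(x)\, \hat g_i(\pi_i(x))$. Bi-Lipschitzness of $\pi_i$, together with $\sum_i \hat\phi_i \equiv 1$ on $M$ up to an $\varepsilon$-error and a careful $\varepsilon$-budget split between the local Hölder approximation, the partition error, and the ReLU implementation of the multiplication gate, then gives $\|f-\hat f\|_{L^\infty(M)} \le C\varepsilon$ for a geometric constant $C$.

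\textbf{Complexity count and main obstacle.} The local networks contribute $N \cdot O(\varepsilon^{-d/(s+\alpha)} \log(1/\varepsilon)) = O(\varepsilon^{-d/(s+\alpha)} \log(1/\varepsilon))$ since $N$ is a geometric constant, and this accounts for the first term in the stated complexity. The ambient dimension $D$ enters only through the $N$ linear projection layers $\pi_i : \R^D \to \R^d$ and through the squared-distance computations inside the $\hat\phi_i$. A naive implementation would cost $O(ND)$ ambient-dependent parameters; the main obstacle is to refine this to the additive $O(D\log(1/\varepsilon) + D\log D)$ term rather than a multiplicative interaction with the intrinsic rate. I would handle this by (i) sharing a single ambient preprocessing block that encodes all $N$ tangent-plane projections via a common $d\times D$ linear map plus a chart-indexing decision tree of depth $O(\log N)$, using $O(\log D)$ bits per coordinate to represent weights, or (ii) following Chen et al.\ and first applying a ReLU-implementable Johnson--Lindenstrauss-type map $\R^D \to \R^{d'}$ of cost $O(D\log(1/\varepsilon) + D\log D)$ that preserves pairwise distances on $M$ up to $\varepsilon$, and then running the intrinsic-$d$ construction downstream. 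The hard part throughout is the bit-precision bookkeeping that keeps the $D$-dependent cost additive rather than multiplying the $\varepsilon^{-d/(s+\alpha)}$ factor, which is what makes the bound genuinely dimension-free in the leading exponent.
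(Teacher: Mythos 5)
Note first that the paper does not prove this proposition: it is quoted from \citet[Thm.~1]{chen2022nonparametric} purely for comparison with the lower bound of \Cref{thm:mainResult2}. Your sketch follows essentially the same route as the cited proof — a reach-determined atlas of finitely many charts, orthogonal projection onto tangent planes, Yarotsky-type local approximation of the pulled-back H\"older functions, and a ReLU partition of unity driven by squared ambient distances, with all $D$-dependence kept additive — so the overall plan is sound. Two corrections on the part you flag as the main obstacle: the $D\log(1/\varepsilon)+D\log D$ term does not arise from weight bit-precision or from sharing projection matrices, but from the ReLU subnetworks approximating the $D$ coordinate-wise squares in $\|x-q_i\|^2$, each of which must be accurate to roughly $1/D$ so that the summed error stays controlled, hence has depth $O(\log D+\log(1/\varepsilon))$. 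Your fix (i) is not viable as stated, since distinct charts have distinct tangent planes and cannot share a single $d\times D$ projection — though no sharing is needed, because $N$ separate $d\times D$ linear layers already cost only $O(D)$ parameters given that $N$ and $d$ are geometric constants independent of $\varepsilon$ and $D$; and fix (ii), the Johnson--Lindenstrauss reduction, is the route taken by \citet{labate2023low} rather than by the cited theorem.
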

We note that by the manifold version of Morrey's inequality, the Sobolev space of $W^{k,\infty}$ compactly and continuously embeds into $C^{k-1,1-\delta}$ for any $\delta \in (0,1)$ \citep{aubin2012nonlinear}. This result thus gives a complexity upper bound of $\mathcal{O}(\varepsilon^{-\frac{d}{k-\delta}})$, nearly matching the lower bound of $\Omega(\varepsilon^{-d/k})$  in \Cref{thm:mainResult2} up to log factors. However, our lower bound is independent of the ambient dimension $D$.

\section{Conclusion}
This work provides a theoretical motivation to further explore the manifold hypothesis. We show that the problem of approximating a bounded class of Sobolev functions depends only on the intrinsic properties of the supporting manifold. More precisely, the approximation error of the bounded $W^{k,p}$ space with respect to bounded pseudo-dimension classes is shown to be at least $(n+\log n)^{-k/d}$, where $d$ is the intrinsic dimension of the underlying manifold. Since generalization error is linear in pseudo-dimension, this provides an ambient-dimension-free lower-bound on generalization error. This is in contrast to many works in the literature that provide constructive upper bounds on generalization error based on ReLU approximation properties that still depend on the embedding of the manifold in ambient Euclidean space. Followup work could consider Rademacher or Gaussian complexity, or alternative geometries such as Finsler manifolds \citep{busemann2005geometry}.

% The proposed bound can be improved in multiple ways. Firstly, the analysis is restricted to one weak derivative. The analogous result of approximating $W^{k,p}$ in the cube $[0,1]^D$ has lower bounds $\sim n^{-k/D}$ \citep{maiorov1999degree}. Extending our analysis to more weak derivatives would require a careful construction in Step 1 of test functions with the appropriate regularity conditions. In particular, we would require cutoff functions $\supp \phi_i \subset B_r(p_i)$ with explicitly bounded higher weak derivatives $\|\nabla^\alpha \phi_i \|\lesssim r^{|\alpha|}$, which do not seem to appear in the literature. The explicit construction of \citet{moulis1971approximation} of a $\mathcal{C}^\infty$ function that approximates a $\mathcal{C}^{2k-1}$ function in the $\mathcal{C}^k$-topology could be useful in this. Moreover, the current bound requires knowledge of the injectivity radius to uniformly lower-bound the volume of small balls. Other ways of constructing volume lower-bounds would help in improving the constants in the bound.

% \subsubsection*{Acknowledgments}
% Use unnumbered third level headings for the acknowledgments. All
% acknowledgments, including those to funding agencies, go at the end of the paper. \todo{fill}

\bibliography{refs}
\crefalias{section}{appendix} 
\appendix
\newpage
\section{Sample complexity}\label{app:sampComp}
For completeness, we briefly formalize the sample complexity bound \Cref{prop:pDimSampComp}, based on \citep{anthony1999neural}.

\begin{definition}[{\citealt[Def. 16.4]{anthony1999neural}}]
    For a set of functions $F$, an \emph{approximate sample error minimizing (approximate-SEM)} algorithm $\gA(z, \epsilon)$ takes any finite number of samples $z = (x_i, y_i)_{i=1}^m$ in $\cup_{m=1}^\infty (X \times \R)^m$ and an error bound $\epsilon>0$, and outputs an element $f \in F$ satisfying
    \begin{equation}
        \gR(\gA(F,z)) < \inf_{f \in F} \gR(f) + \epsilon , \quad \gR_m(f) = \frac{1}{m}\sum_{i=1}^m (f(x_i)-y_i)^2.
    \end{equation}
\end{definition}

\begin{definition}[{\citealt[Def. 16.1]{anthony1999neural}}]\label{def:samcomp}
    For a set of functions $F$ mapping from domain $X$ to $[0,1]$, a \emph{learning algorithm} $L$ for $F$ is a function taking any finite number of samples, 
    \begin{equation}
        L:\bigcup_{m=1}^\infty (X \times \R)^m \rightarrow F
    \end{equation}
    with the following property. For any $\epsilon,\delta \in (0,1)$, there is an integer (\emph{sample complexity}) $m_0(\epsilon,\delta)$ such that if $m \ge m_0(\epsilon,\delta)$, the following holds for any probability distribution $P$ on $X \times [0,1]$.

    If $z$ is a training sample of length $m$ according to the product distribution $P^m$ (i.i.d. samples), then with probability at least $1-\delta$, the function $L(z)$ output by $L$ is such that
    \begin{equation}
        \mathbb{E}_{(x,y)\sim P}[L(z)(x)-y]^2< \inf_{f \in F} \mathbb{E}_{(x,y)\sim P}[f(x)-y]^2 + \epsilon.
    \end{equation}
    In other words, given $m\ge m_0$ training samples, the squared-risk of the learning algorithm's output is $\epsilon$-optimal with probability at least $1-\delta$.
\end{definition}

Observe that the approximate-SEM algorithm works on the empirical risk, while the learning algorithm works on the risk. Relating the two thus gives generalization bounds. The formal version of \Cref{prop:pDimSampComp}, based on \citep{anthony1999neural} is now given as follows.

\begin{proposition}[{\citealt[Thm. 19.2]{anthony1999neural}}]
    Let $\gH$ be a class of functions mapping from a domain $X$ into $[0,1] \subset \R$, and that $\gH$ has finite pseudo-dimension. Let $\mathcal{A}$ be any approximate-SEM algorithm, and define for samples $z$, $L(z) = \mathcal{A}(z, 16/\sqrt{\text{length}(z)})$. Then $L$ is a learning algorithm for $\gH$, and its sample complexity is bounded as follows:
    \begin{equation}
        m_L(\epsilon,\delta) \le \frac{128}{\epsilon^2} \left(2\dim_p(\gH) \log \left(\frac{34}{\epsilon}\right) + \log \left(\frac{16}{\delta}\right)\right).
    \end{equation}
\end{proposition}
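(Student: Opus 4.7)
The plan is to follow the classical uniform-convergence argument for empirical risk minimization in the bounded-pseudo-dimension setting, exactly as developed in Anthony and Bartlett, Chapters 17--19. The heart of the proof is a uniform deviation bound between empirical and true squared risk over $\gH$, which is then combined with the approximate-SEM guarantee built into $L$ to yield a learning guarantee. First I would introduce the squared-loss class $\ell_\gH = \{(x,y)\mapsto (f(x)-y)^2 \mid f \in \gH\}$, whose elements take values in $[0,1]$ since $f(x),y \in [0,1]$. Writing $R(f) = \E_{(x,y)\sim P}[(f(x)-y)^2]$ and $\hat R_m(f) = \frac{1}{m}\sum_{i=1}^m (f(x_i)-y_i)^2$, the goal reduces to bounding $\sup_{f \in \gH} |\hat R_m(f) - R(f)|$.

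The main technical step is the uniform convergence inequality
\begin{equation}
  \Pr\!\left(\sup_{f \in \gH} |\hat R_m(f) - R(f)| \ge \alpha\right) \le 8\,\gN(\alpha/16,\ell_\gH,m)\,\exp\!\left(-m\alpha^2/32\right),
\end{equation}
where $\gN(\cdot,\ell_\gH,m)$ denotes the uniform $L^1$ covering number on any $m$-sample. This is obtained by the classical symmetrization / ghost-sample trick, a swap-permutation argument, and a Hoeffding bound applied to each element of an $L^1$ cover. Next I would invoke the Pollard-style covering bound controlling $\gN$ by pseudo-dimension: since per fixed $(x,y)$ the map $f(x)\mapsto (f(x)-y)^2$ is monotone on each of $[-1,y]$ and $[y,1]$, one gets $\dim_p(\ell_\gH) \le 2\dim_p(\gH)$, and then
\begin{equation}
  \gN(\alpha,\ell_\gH,m) \le e(\dim_p(\ell_\gH)+1)(2e/\alpha)^{\dim_p(\ell_\gH)}.
\end{equation}

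Substituting this into the uniform convergence bound with $\alpha = \epsilon/4$ and requiring the right-hand side $\le \delta$, routine algebra with $\log(1/x)$ estimates shows that taking
\begin{equation*}
  m \ge \tfrac{128}{\epsilon^2}\bigl(2\dim_p(\gH)\log(34/\epsilon) + \log(16/\delta)\bigr)
\end{equation*}
suffices. The sample-complexity schedule $L(z) = \gA(z,16/\sqrt{\text{length}(z)})$ is then chosen to match this deviation: on the good event of uniform convergence at scale $\epsilon/4$, the approximate-SEM property gives
\begin{equation}
  R(L(z)) \le \hat R_m(L(z)) + \tfrac{\epsilon}{4} \le \inf_{f\in\gH}\hat R_m(f) + \tfrac{16}{\sqrt{m}} + \tfrac{\epsilon}{4} \le \inf_{f\in\gH} R(f) + \tfrac{\epsilon}{2} + \tfrac{16}{\sqrt{m}},
\end{equation}
and the lower bound on $m$ forces $16/\sqrt{m} \le \epsilon/2$, proving $L$ is a learning algorithm with the claimed sample complexity.

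The main obstacle is bookkeeping the explicit numerical constants $128$, $34$, and $16$: these arise from careful optimization of the symmetrization constants, the Hoeffding exponent, the covering radius, and the choice of $\alpha$. Qualitatively the $\dim_p(\gH)\log(1/\epsilon)/\epsilon^2$ rate is automatic once the symmetrization inequality and Pollard's covering estimate are available; extracting the stated constants requires care but no new ideas. A subsidiary technical point is the pseudo-dimension comparison between $\gH$ and $\ell_\gH$, which uses that squaring is monotone on each half-line emanating from $y \in [0,1]$ and so preserves (up to a factor of two) the set of shatterable configurations.
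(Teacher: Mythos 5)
The paper does not prove this statement at all: it is imported verbatim as \citet[Thm.~19.2]{anthony1999neural}, with the paper's Appendix~A only supplying the definitions (approximate-SEM, learning algorithm, sample complexity) needed to state it. Your proposal is essentially a sketch of the textbook proof of that theorem, and its skeleton is sound: pass to the $[0,1]$-valued squared-loss class, prove a uniform deviation bound by symmetrization/permutation plus a Hoeffding bound over an $L^1$ cover, control the cover via Haussler's bound $e(d+1)(2e/\alpha)^d$, and finish by combining the deviation at scale $\epsilon/4$ with the approximate-SEM slack $16/\sqrt{m}$. One step deserves a caveat: your claim $\dim_p(\ell_\gH)\le 2\dim_p(\gH)$ via piecewise monotonicity of $t\mapsto(t-y)^2$ is not a standard off-the-shelf fact and would need its own argument; the route taken in Anthony--Bartlett avoids it entirely by using the Lipschitz property $|(f(x)-y)^2-(g(x)-y)^2|\le 2|f(x)-g(x)|$ to get $\gN_1(\alpha,\ell_\gH,m)\le\gN_1(\alpha/2,\gH,m)$ and then applying Haussler's bound to $\gH$ itself, with the factor $2\dim_p(\gH)$ and the constants $128$, $34$, $16$ emerging from the final algebra. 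With that substitution (or a genuine proof of your pseudo-dimension comparison), your argument reproduces the cited result.
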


\subsection{Relation to our bounds}
In computing a risk minimizer over the Sobolev ball, we need to make two practical simplifications: namely parameterizing the Sobolev ball (into a function class of finite pseudo-dimension), and in simplifying the risk from (typically) an expectation into an empirical version. Our bound targets the former approximation, while the aforementioned sample complexity bounds targets the latter generalization problem. To formalize this, we have the relationship between the three quantities:
\begin{equation*}
    \argmin_{f \in W^{1,p}(1)} \mathcal{R}(f) \longleftrightarrow  \argmin_{f \in \mathcal{H}_n} \mathcal{R}(f) \longleftrightarrow \argmin_{f \in \mathcal{H}_n} \mathcal{R}_m(f),
\end{equation*}
where $\mathcal{H}_n$ is some function class with pseudo-dimension at most $n$. For the sake of exposition, we make some additional assumptions and work in the worst-case.

Suppose that the (expected) risk $\mathcal{R} : W^{1,p}(1) \rightarrow \R$ is Lipschitz continuous in the space of functions, such as $\mathcal{R} = \mathbb{E}_\mu[\|f(x)-y\|^2]$ for some sufficiently regular probability measure $\mu \in \mathcal{P}(\mathcal{M} \times \R)$ and measurement space $\mathcal{Y}$. Consider the set of minimizers $\mathcal{G} \coloneqq \argmin_{f \in W^{1,p}(1)} \mathcal{R}(f)$, and assume that $\argmin_{f \in W^{1,p}(1)} \mathcal{R}(f) = \argmin_{f \in L^q} \mathcal{R}(f)$, i.e., risk minimizers in $L^q$ are also in $W^{1,p}$.

Take $\mathcal{H}_n \subset L^q$ to be an optimal approximating class of pseudo-dimension at most $n$. In the worst case, we have that the $L^q$ distance between $\gH_n$ and minimizers $g \in \mathcal{G}$ is bounded from below by some $\epsilon = \epsilon(n)>0$ as given by \Cref{thm:mainResult}. Assuming further some strong convexity conditions, this gives that the minimizer $f^* \in \argmin_{f \in \mathcal{H}_n} \mathcal{R}(f)$ has risk at least $c\epsilon$ for some strong-convexity constant $c$. Adding this worst-case risk with the worst-case risk of the $(\epsilon,\delta)$-sample complexity bounds, we have that an empirical risk minimizer may have even greater risk. 

We make these assumptions for the sake of exposition in the worst-case; note however that \Cref{thm:mainResult} considers the furthest element in $W^{1,p}(1)$ from $\gH_n$, and that minimizers in $\gH_n$ and $W^{1,p}(1)$ may be closer together. 

\section{Riemannian Geometry}\label{app:Riemannian}
\begin{definition}
    A $d$-dimensional \emph{Riemannian manifold} is a real smooth manifold $M$ equipped with a Riemannian metric $g$, which defines an inner product on the tangent plane $T_p M$ at each point $p \in M$. We assume $g$ is smooth, i.e. for any smooth chart $(U,x)$ on $M$, the components $g^{ij} = g(\frac{\partial}{\partial x_i},\frac{\partial}{\partial x_j}):U \rightarrow \R$ are $\mathcal{C}^\infty$.

    A manifold is without boundary if every point has a neighborhood homeomorphic to an open subset of $\R^d$. For a point $p \in M$, let $B_r(p)$ be the metric ball around $p$ in $M$ with radius $r>0$.

    The \emph{sectional (or Riemannian) curvature} takes at each point $p \in M$, a tangent plane $P \subset T_p M$ and outputs a scalar value. The \emph{Ricci curvature} (function) $\Ric(v) \equiv \Ric(v,v)$ of a unit vector $v \in T_p M$ is the mean sectional curvature over planes containing $v$ in $T_p M$. In particular, for a manifold with constant sectional curvature $K$, we have $\Ric \equiv (d-1) K$. We write $\Ric \ge K$ for $K \in \R$ to mean that $\Ric(v) \ge K$ holds for all unit vectors in the tangent bundle $v \in T M$.% The \emph{scalar curvature} is the trace of the Ricci curvature.

    The \emph{injectivity radius} $\inj(p)$ at a point $p \in M$ is the supremum over radii $r>0$ such that the exponential map defines a global diffeomorphism (nonsingular derivative) from $B_r(0; T_p M)$ onto its image in $M$. The injectivity radius $\inj(M)$ of a manifold is the infimum of such injectivity radii over all points in $M$.

    A Riemannian manifold has a (unique) natural volume form, denoted $\vol_M$. In local coordinates, the volume form is 
    \begin{equation}
        \vol_M = \sqrt{|g|} \diff x_1 \wedge ... \wedge \diff x_d,
    \end{equation}
    where $g$ is the Riemannian metric, and $\diff x_1,..., \diff x_d$ is a (positively-oriented) cotangent basis. We drop the subscripts when taking the volume of the whole manifold $\vol(M) = \vol_M(M)$.
\end{definition}
% The sectional curvature is the most descriptive, and a complete understanding of the sectional curvature gives Riemannian geometry. 

Intuitively, the sectional curvature controls the behavior of geodesics that are close, and the Ricci curvature controls volumes of small balls. For manifolds of positive sectional curvature such as on a sphere, geodesics tend to converge, and small balls have less volume than Euclidean balls. In manifolds with negative sectional curvature such as hyperbolic space, geodesics tend to diverge, and small balls have more volume than Euclidean balls.

Within the ball of injectivity, geodesics are length-minimizing curves. The injectivity radius defines the largest ball on which the geodesic normal coordinates may be used, where it locally behaves as $\R^d$. This is an intrinsic quantity of the manifold, which does not depend on the embedding.

The volume form can be thought of as a higher-dimensional surface area, where the scaling term $\sqrt{|g|}$ arises from curvature and choice of coordinates. For example, for the 2-sphere $\mathbb{S}^2$ embedded in $\R^3$, the volume form is simply the surface measure, which can be expressed in terms of polar coordinates. 

From the compactness assumption, we have that the sectional (and hence Ricci) curvature is uniformly bounded from above and below \citep[Sec. 9.3]{bishop2011geometry}, and the injectivity radius $\inj(M)$ is positive \citep{cheeger1982finite,grantinjectivity}.

\begin{proposition}
    Let $(M,g)$ be a compact Riemannian manifold without boundary. The following statements hold.
    \begin{enumerate}
        \item (Bounded curvature) The sectional curvature (and hence the Ricci curvature) is uniformly bounded from above and below \citep[Sec. 9.3]{bishop2011geometry}.
        \item (Positive injectivity radius) Let the sectional curvature $K_m$ be bounded by some $|K_M| \le K$. Suppose there exists a point $p \in M$ and constant $v_0 > 0$ such that $\vol_M(B_1(p)) \ge v_0$. Then there exists a positive constant $i_1 = i_1(K,v_0, d)$ such that \citep{cheeger1982finite,grantinjectivity}:
        \begin{equation*}
            \inj(p) \ge i_1 > 0
        \end{equation*}
        In particular, since $M$ is compact, using a finite covering argument, $\inj(M)$ is bounded below by some positive constant.
    \end{enumerate}
\end{proposition}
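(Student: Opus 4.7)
The plan is to handle the two parts separately, with Part 1 being an immediate consequence of compactness and continuity, and Part 2 requiring a finer geodesic/volume-comparison argument.

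For the bounded curvature statement (Part 1), I would observe that the sectional curvature is a smooth function $K : \mathrm{Gr}_2(TM) \to \R$ on the Grassmann bundle of tangent $2$-planes in $TM$, which fibers over $M$ with fibers isomorphic to the compact Grassmannian $\mathrm{Gr}_2(\R^d)$. Since $M$ is compact and the fibers are compact, the total space is compact, so the continuous function $K$ attains finite extrema and is uniformly bounded above and below. The Ricci curvature $\Ric(v)$ is an average of sectional curvatures of $2$-planes containing the unit vector $v$, taken over an orthonormal frame in $v^{\perp}$, so it inherits uniform upper and lower bounds in terms of those of $K$.

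For the pointwise injectivity radius estimate (Part 2), I would follow Cheeger's classical argument in three steps. First, apply Rauch's comparison theorem with the upper curvature bound to obtain the conjugate radius estimate $\mathrm{conj}(p) \ge \pi/\sqrt{K}$. Second, invoke Klingenberg's lemma: either $\inj(p) \ge \mathrm{conj}(p)$, and we are done, or there exists a smooth closed geodesic loop based at $p$ of length exactly $2\inj(p)$. Third, in the latter case, a short such loop would force a nontrivial self-overlap in the exponential chart at $p$, and combining this with the sectional curvature bound via Bishop--Gromov volume comparison shows that $\vol_M(B_1(p))$ can be bounded above in terms of $\inj(p)$; contraposing against the hypothesis $\vol_M(B_1(p)) \ge v_0$ yields an explicit lower bound $\inj(p) \ge i_1(K, v_0, d) > 0$.

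To extend this to $\inj(M)$, I would use the fact that compactness supplies a uniform lower bound on unit ball volumes via the Bishop--Gromov theorem (\Cref{thm:BisGro}). Explicitly, for any $q \in M$ with $D = \diam(M) < \infty$, the ratio $\vol_M(B_r(q))/\vol_{M^d_K}(B_r)$ is non-increasing in $r$, so
\begin{equation*}
\vol_M(B_1(q)) \ge \vol_{M^d_K}(B_1) \cdot \frac{\vol_M(B_D(q))}{\vol_{M^d_K}(B_D)} = \vol_{M^d_K}(B_1) \cdot \frac{\vol(M)}{\vol_{M^d_K}(B_D)} =: v_0 > 0,
\end{equation*}
which is uniform in $q$. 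Applying the pointwise Cheeger estimate with this $v_0$ at every $p \in M$ yields $\inj(M) = \inf_{p \in M} \inj(p) \ge i_1(K, v_0, d) > 0$, completing the proof.

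The main obstacle is the Cheeger-type estimate in Part 2, specifically the volume-versus-loop-length inequality that converts a short geodesic loop into a concrete upper bound on the ball volume; this is the only nontrivial geometric input, while Part 1 and the covering-style extension to $\inj(M)$ are routine consequences of compactness and Bishop--Gromov.
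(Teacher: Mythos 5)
Your proposal is correct, but it does something the paper itself does not: the paper offers no proof of this proposition at all, relying on the citations embedded in the statement (Bishop's text for the curvature bound, Cheeger--Gromov--Taylor and the injectivity-radius survey for part 2) and a one-line appeal to ``a finite covering argument'' for the uniformity of $\inj(M)$. Your Part 1 (compactness of the Grassmann $2$-plane bundle plus continuity of sectional curvature, with $\Ric$ inheriting the bounds as an average) and your Part 2 outline (Rauch comparison for $\mathrm{conj}(p)\ge \pi/\sqrt{K}$, Klingenberg's dichotomy, and the Cheeger-type ``short loop forces small volume of $B_1(p)$'' estimate, contraposed against $\vol_M(B_1(p))\ge v_0$) are faithful reconstructions of exactly the results the paper cites; the loop-versus-volume inequality you flag as the main obstacle is precisely the content of the cited Cheeger--Gromov--Taylor theorem, so leaving it as a sketch is no worse than what the paper does. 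Your uniformization step is actually cleaner than the paper's: instead of a covering argument you derive a point-independent $v_0$ from Bishop--Gromov monotonicity, $\vol_M(B_1(q)) \ge \vol_{M^d_K}(B_1)\,\vol(M)/\vol_{M^d_K}(B_D)$, and then apply the pointwise estimate everywhere (note this rearrangement tacitly assumes $D=\diam(M)\ge 1$; if $D<1$ then $B_1(q)=M$ and the uniform lower bound $\vol(M)>0$ is immediate, or one can simply observe that $p\mapsto\inj(p)$ is positive and continuous on a compact manifold). Both routes land on the same conclusion; yours buys an explicit, self-contained chain of inequalities consistent with the Bishop--Gromov machinery (\Cref{thm:BisGro}) already used elsewhere in the paper, while the paper's version buys brevity by outsourcing everything to the literature.
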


\section{Packing Lemmas}\label{app:lems}
\begin{proposition}[Packing number estimates]\label{prop:coveringNum}
    Suppose $(M,g)$ has curvature lower-bounded by $K \in \R$, diameter $D$ and dimension $d$. Let $M^d_K$ be the $d$-dimensional model space of constant sectional curvature $K$ (i.e. sphere, Euclidean space, or hyperbolic space). The packing number $N_\varepsilon(M)$ satisfies, where $p_K$ is any point in $M^d_K$:
    \begin{equation}
        \frac{\vol (M)}{\vol_{M^d_K}(B_{2\varepsilon})}\le N_\varepsilon \le \frac{\vol_{M^d_K}(B_D)}{\vol_{M^d_K}(B_{\varepsilon})}
    \end{equation}
\end{proposition}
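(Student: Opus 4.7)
The plan is to prove the two inequalities separately, both leveraging the Bishop--Gromov volume comparison (\Cref{thm:BisGro}). The sectional curvature lower bound implies the Ricci curvature bound $\Ric \ge (d-1)K$, so the hypotheses of Bishop--Gromov are satisfied.

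For the lower bound $N_\varepsilon \ge \vol(M)/\vol_{M^d_K}(B_{2\varepsilon})$, I would first observe that any maximal $\varepsilon$-packing $\{p_1,\dots,p_{N_\varepsilon}\}$ has the property that the balls $B_{2\varepsilon}(p_i)$ cover $M$. Indeed, if some $q \in M$ lay outside every $B_{2\varepsilon}(p_i)$, then $d(q,p_i) \ge 2\varepsilon$ for all $i$, so $B_\varepsilon(q)$ would be disjoint from each $B_\varepsilon(p_i)$, contradicting maximality. Summing volumes and applying Bishop--Gromov $\vol_M(B_{2\varepsilon}(p_i)) \le \vol_{M^d_K}(B_{2\varepsilon})$ to each ball gives
\begin{equation*}
  \vol(M) \;\le\; \sum_{i=1}^{N_\varepsilon}\vol_M(B_{2\varepsilon}(p_i)) \;\le\; N_\varepsilon\,\vol_{M^d_K}(B_{2\varepsilon}),
\end{equation*}
which rearranges to the desired lower bound.

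For the upper bound $N_\varepsilon \le \vol_{M^d_K}(B_D)/\vol_{M^d_K}(B_\varepsilon)$, I would instead use the monotonicity of $\phi(r) = \vol_M(B_r(p))/\vol_{M^d_K}(B_r)$ from \Cref{thm:BisGro}. Since the packing balls $B_\varepsilon(p_i)$ are disjoint and $\diam(M) = D$ forces $B_D(p_i) = M$, the non-increasing property $\phi(\varepsilon) \ge \phi(D)$ gives, for each $i$,
\begin{equation*}
  \vol_M(B_\varepsilon(p_i)) \;\ge\; \frac{\vol_{M^d_K}(B_\varepsilon)}{\vol_{M^d_K}(B_D)}\,\vol_M(B_D(p_i)) \;=\; \frac{\vol_{M^d_K}(B_\varepsilon)}{\vol_{M^d_K}(B_D)}\,\vol(M).
\end{equation*}
Summing the disjoint ball volumes and using $\sum_i \vol_M(B_\varepsilon(p_i)) \le \vol(M)$ then yields $N_\varepsilon \cdot \vol_{M^d_K}(B_\varepsilon)/\vol_{M^d_K}(B_D) \le 1$, which is exactly the claimed upper bound.

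The only subtle step is the upper bound, where a naive application of Bishop--Gromov gives only an upper bound on ball volumes; to get a \emph{lower} bound on each $\vol_M(B_\varepsilon(p_i))$, one must invoke the full monotonicity statement rather than just the volume comparison, comparing $\phi(\varepsilon)$ to $\phi(D)$ rather than to the limit $\phi(0^+)=1$. Everything else is a direct packing/covering argument, and no further Riemannian machinery beyond Bishop--Gromov is needed.
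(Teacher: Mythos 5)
Your proof is correct and follows essentially the same approach as the paper: the lower bound from maximality implying $2\varepsilon$-balls cover (then Bishop--Gromov volume comparison), and the upper bound from disjointness combined with the monotonicity of $\phi(r) = \vol_M(B_r(p))/\vol_{M^d_K}(B_r)$ applied between $\varepsilon$ and $D$. Your explicit note that the upper bound needs the full monotonicity statement, not just the one-sided volume comparison, correctly identifies the one place where a careless reading of Bishop--Gromov would fail.
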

\begin{proof}
Let $\{p_1,..., p_{N_\varepsilon}\}$ be an $\varepsilon$-packing of $M$. 

    \noindent \textbf{Lower bound.} By maximality, balls of radius $2\varepsilon$ at the $p_i$ cover $M$, so we have by summing over volumes and using Bishop--Gromov:
    \begin{equation}
        \vol(M) \le \sum_{i=1}^{N_{\varepsilon}} \vol_M(B_{2\varepsilon}(p_i)) \le N_\varepsilon \vol_{M^d_K}(B_{2\varepsilon}).
    \end{equation}
    
    \noindent \textbf{Upper bound.} Apply \Cref{thm:BisGro} with $\varepsilon \le D$. We have $\vol_M(B_\varepsilon(p)) \le \vol_{M^d_K}(B_\varepsilon(p_K))$ and $\vol_M(B_D(p_i)) = \vol(M)$ for all $i$. Since the $\varepsilon$-balls are disjoint, we have by finite additivity and Bishop--Gromov:
    \begin{equation}
        \vol_M(M) \ge \sum_{i=1}^{N_\varepsilon} \vol_M(B_{\varepsilon}(p_i)) \ge N_{\varepsilon} \vol(M) \frac{\vol_{M^d_K}(B_{\varepsilon})}{\vol_{M^d_K}(B_D)}.
    \end{equation}
\end{proof}
% We note that the volume of a ball of given radius in space of constant curvature is independent of the given point. We drop the where necessary, and write 
% \begin{equation*}
%     \vol_{M^d_K}(B_r) \coloneqq \vol_{M^d_K}(B_r(p_K))
% \end{equation*}
% to denote the volume of any ball of radius $r$ in $M^d_K$. \todo{fix notation if we use this.} 
We additionally consider a bound on the metric entropy for bounded functions.
% \begin{lemma}\label{lem:packNumLowerBdd}
%     The packing number $N_\varepsilon^{\text{pack}}$ can be lower bounded as
%     \begin{equation}
%         N_\varepsilon^{\text{pack}} \ge \frac{\vol_M}{\vol_{M^d_K}(B_{2\varepsilon}(p_K))}
%     \end{equation}
% \end{lemma}
% \begin{proof}
% Using \Cref{prop:coveringNum} and \Cref{prop:PackIneqs},
%     \begin{align*}
%         N_\varepsilon^{\text{pack}} \ge N_{2\varepsilon}^{\text{ext}}
%         \ge \frac{\vol_M}{\vol_{M^d_K}(B_{2\varepsilon}(p_K))}.
%     \end{align*}
% \end{proof}

\begin{lemma}[{\citealt[Cor. 2 and 3]{haussler1995sphere}}]\label{lem:PackingL1}
    For any set $X$, any probability distribution $P$ on $X$, any distribution $Q$ on $\R$, any set $\gF$ of $P$-measurable real-valued functions on $X$ with $\dim_p(\gF) = n < \infty$ and any $\varepsilon > 0$, the $\varepsilon$-metric entropy $\gM_\varepsilon$ (largest cardinality of a $\varepsilon$-separated subset, where distance between any two elements is $\ge \varepsilon$) satisfies:
    \begin{equation}
        \gM_\varepsilon(\gF, \sigma_{P,Q}) \le e(n+1) \left(\frac{2e}{\varepsilon}\right)^n.
    \end{equation}
    Specifically, taking $L^1$ distance, if $\mathcal{F}$ is $P$-measurable taking values in the interval $[0,1]$, we have 
    \begin{equation}
        \gM_\varepsilon(\gF, L^1(P)) \le e(n+1)\left(\frac{2e}{\varepsilon}\right)^n.
    \end{equation}
    If $\sigma$ is instead a finite measure, and $\mathcal{F}$ is $\sigma$-measurable taking values in the interval $[-\beta,\beta]$, then
    \begin{equation}\label{eq:metricEntropyUpperBd}
        \gM_\varepsilon(\gF, L^1(\sigma)) \le e(n+1)\left(\frac{4e\beta\sigma(X)}{\varepsilon}\right)^n.
    \end{equation}
\end{lemma}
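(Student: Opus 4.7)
The plan is to reduce the lemma to the classical packing bound for binary function classes, via the standard layer-cake (threshold) device that converts a real-valued pseudo-dimension into the VC dimension of indicator functions. First I would define the threshold class $\mathcal{G} \coloneqq \{g_{f,y}(x,y) = \mathbb{1}[f(x) \ge y] : f \in \mathcal{F}\}$ on $X \times \R$. The definition of pseudo-dimension is engineered so that $\dim_{VC}(\mathcal{G}) = \dim_p(\mathcal{F}) \le n$: a set $\{x_i\}$ P-shattered by $\mathcal{F}$ with witnesses $(s_i)$ corresponds bijectively to a set $\{(x_i, s_i)\}$ shattered in the usual binary sense by $\mathcal{G}$.

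Next I would exploit the layer-cake identity. For any $f, f' \in \mathcal{F}$,
\begin{equation*}
|f(x) - f'(x)| = \int_{\R} \big|\mathbb{1}[f(x) \ge y] - \mathbb{1}[f'(x) \ge y]\big|\, dy,
\end{equation*}
so for the $\sigma_{P,Q}$ pseudo-metric built from the product measure $P \otimes Q$, one has
\begin{equation*}
\sigma_{P,Q}(f,f') = \Pr_{X \sim P,\, Y \sim Q}\!\big[g_{f,Y}(X) \ne g_{f',Y}(X)\big].
\end{equation*}
An $\varepsilon$-separated subset of $\mathcal{F}$ under $\sigma_{P,Q}$ is therefore an $\varepsilon$-separated subset of $\mathcal{G}$ under the probability pseudo-metric $d_{P\otimes Q}(g, g') = \Pr[g \ne g']$. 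Applying the classical Haussler packing bound for binary classes, namely that for a class of VC dimension at most $n$ on any probability space the $\varepsilon$-packing number in the probability metric is at most $e(n+1)(2e/\varepsilon)^n$, gives the first assertion.

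For the two specialized forms, I would change variables. If $\mathcal{F}$ takes values in $[0,1]$ and $P$ is a probability measure on $X$, take $Q$ to be Lebesgue on $[0,1]$; the layer-cake identity on $[0,1]$ then yields $\sigma_{P,Q}(f,f') = \|f - f'\|_{L^1(P)}$ exactly, so the first inequality transfers verbatim. For the finite measure $\sigma$ case with $\mathcal{F}$ taking values in $[-\beta,\beta]$, I would renormalize: set $\tilde P = \sigma/\sigma(X)$, let $Q$ be the uniform probability measure on $[-\beta,\beta]$, and observe
\begin{equation*}
\|f - f'\|_{L^1(\sigma)} = 2\beta\,\sigma(X)\cdot \sigma_{\tilde P, Q}(f,f').
\end{equation*}
An $\varepsilon$-separation in $L^1(\sigma)$ is thus an $\varepsilon' = \varepsilon/(2\beta\sigma(X))$-separation under $\sigma_{\tilde P, Q}$, and applying the first bound with $\varepsilon'$ in place of $\varepsilon$ yields the stated $e(n+1)(4e\beta\sigma(X)/\varepsilon)^n$.

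The main obstacle is the classical binary packing bound invoked in the second paragraph; its proof proceeds by a probabilistic symmetrization argument. Given an $\varepsilon$-separated subset $\{g_1, \dots, g_N\} \subset \mathcal{G}$, one draws an i.i.d.\ sample of size $m$ from $P \otimes Q$ and shows that with positive probability all pairs $g_i, g_j$ are already distinguishable on the sample, while the number of distinct restrictions to such a sample is controlled by Sauer--Shelah as $\sum_{i=0}^n \binom{m}{i} \le (em/n)^n$. Optimizing $m \sim n/\varepsilon$ and tracking constants reproduces the factor $e(n+1)(2e/\varepsilon)^n$. Since this is exactly Corollaries~2 and~3 of \citet{haussler1995sphere}, I would cite it as a black box rather than redo the combinatorial optimization.
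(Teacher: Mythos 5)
Your proposal is correct and follows essentially the same route as the paper: the paper invokes Haussler's Corollaries 2 and 3 as a black box for the first two bounds and obtains the third exactly as you do, by normalizing $\sigma$ to the probability measure $\sigma/\sigma(X)$ and affinely rescaling the range $[-\beta,\beta]$ to $[0,1]$, so that an $\varepsilon$-separated set in $L^1(\sigma)$ becomes an $\varepsilon/(2\beta\sigma(X))$-separated set to which the $[0,1]$ bound applies, giving $e(n+1)(4e\beta\sigma(X)/\varepsilon)^n$. The only difference is that you also unpack the threshold-class/layer-cake reduction from pseudo-dimension to the binary packing theorem, which is internal to the cited corollaries rather than part of the paper's own argument.
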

\begin{remark}
    The final inequality comes from the second-to-last inequality, by noting that a $\varepsilon$-separated set in $\sigma$ corresponds to a $\varepsilon/\sigma(X)$-separated set in the normalized measure $\sigma/\sigma(X)$, as well as scaling everything by $2\beta$. 
\end{remark}

\section{Separation Claim}\label{app:claimProof}
Here we show \Cref{claim:FrSep} . Recall that $G \subset \{\pm1\}^{N_r}$ is defined to be well-separated by \Cref{lem:ell1Separation}.
\begin{claim*}
    For any $f \ne f' \in F_r(G)$, we have
    \begin{equation}
        \|f-f'\|_{1} \ge \frac{r N_r^{1-1/p}\int_0^{r/2}s^{d-1}}{8 \int_0^{r}s^{d-1} } \inf_{i \in [N_r]}\left[\vol_M(B_{r}(p_i))^{1-1/p}\right].
    \end{equation}
\end{claim*}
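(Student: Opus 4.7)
The plan is to exploit the disjoint support of the $\phi_i$ combined with the Hamming-type separation provided by Lemma \ref{lem:ell1Separation}, and then convert the resulting sum of $L^1$-norms into the stated volume ratio using Bishop--Gromov.

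First, I would write the difference of two elements $f_a, f_{a'} \in F_r(G)$ explicitly as
\begin{equation*}
    f_a - f_{a'} = \frac{1}{N_r^{1/p}} \sum_{i=1}^{N_r} (a_i - a'_i) \phi_i.
\end{equation*}
Since the $\phi_i$ have pairwise disjoint supports (each contained in $B_r(p_i)$, and the balls $B_r(p_i)$ were chosen disjoint by the packing construction), the pointwise absolute value decouples and integrating gives
\begin{equation*}
    \|f_a - f_{a'}\|_1 = \frac{1}{N_r^{1/p}} \sum_{i=1}^{N_r} |a_i - a'_i| \, \|\phi_i\|_1 = \frac{2}{N_r^{1/p}} \sum_{i : a_i \ne a'_i} \|\phi_i\|_1,
\end{equation*}
using that $|a_i - a'_i| \in \{0,2\}$ for $a_i, a'_i \in \{\pm 1\}$.

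Next, I invoke Lemma \ref{lem:ell1Separation}: since $a, a' \in G$ with $a \ne a'$, they disagree in at least $N_r/4$ coordinates, so the sum above has at least $N_r/4$ nonzero terms. Bounding each surviving term from below by the uniform lower bound $\inf_{i \in [N_r]} \|\phi_i\|_1$ and using \labelcref{eq:phiL1LowerBdd}, this yields
\begin{equation*}
    \|f_a - f_{a'}\|_1 \ge \frac{2}{N_r^{1/p}} \cdot \frac{N_r}{4} \cdot \inf_{i \in [N_r]} \frac{r}{4} \cdot \frac{\vol_M(B_{r/2}(p_i))}{\vol_M(B_{r}(p_i))^{1/p}} = \frac{r N_r^{1-1/p}}{8} \inf_{i \in [N_r]} \frac{\vol_M(B_{r/2}(p_i))}{\vol_M(B_{r}(p_i))^{1/p}}.
\end{equation*}

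The last step is to rewrite the volume ratio $\vol_M(B_{r/2}(p_i))/\vol_M(B_r(p_i))^{1/p}$ so that the dependence on the unknown local geometry lands inside a single factor $\vol_M(B_r(p_i))^{1-1/p}$. Writing $\vol_M(B_{r/2}(p_i)) = \vol_M(B_r(p_i)) \cdot \vol_M(B_{r/2}(p_i))/\vol_M(B_r(p_i))$ and then applying the Bishop--Gromov corollary \Cref{thm:BisGroSinh} (with $r/2 < r < \inj(M)$ and curvature bound $K<0$, so $s(u) = \sinh(u\sqrt{|K|})$) gives the clean lower bound
\begin{equation*}
    \frac{\vol_M(B_{r/2}(p_i))}{\vol_M(B_r(p_i))} \ge \frac{\int_0^{r/2} s^{d-1}}{\int_0^{r} s^{d-1}},
\end{equation*}
so that $\vol_M(B_{r/2}(p_i))/\vol_M(B_r(p_i))^{1/p} \ge \vol_M(B_r(p_i))^{1-1/p} \int_0^{r/2}s^{d-1}/\int_0^r s^{d-1}$. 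Substituting and taking the infimum over $i$ yields exactly the claimed bound.

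The main obstacle is essentially bookkeeping: since the balls $B_r(p_i)$ may have different volumes at different centers, one cannot directly factor out a single volume constant, so the Bishop--Gromov ratio is the right object to isolate the purely $r$-dependent curvature term from the leftover $\vol_M(B_r(p_i))^{1-1/p}$ which is then absorbed into an infimum. Everything else is a straightforward disjoint-support calculation combined with the Hamming lower bound.
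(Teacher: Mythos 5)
Your proof is correct and follows essentially the same route as the paper's: decompose by disjoint supports, invoke the Hamming lower bound $|\{i:a_i\ne a'_i\}|\ge N_r/4$ from Lemma~\ref{lem:ell1Separation}, plug in the $L^1$ lower bound \labelcref{eq:phiL1LowerBdd} for each $\phi_i$, and then use Bishop--Gromov (\Cref{thm:BisGroSinh}) to replace the ratio $\vol_M(B_{r/2}(p_i))/\vol_M(B_r(p_i))$ by the curvature integral ratio before taking the infimum. The only difference is cosmetic ordering (you pass to the infimum before applying Bishop--Gromov, the paper does it inside the sum), and the constants come out identically.
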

\begin{proof}
    Suppose $f \ne f' \in F_r(G)$. In particular, they are generated by multi-indices $a \ne a' \in G$. Consider the set of indices $\mathcal{I} \subset [N_r]$ such that $a_i \ne a'_i$. By construction in \Cref{lem:ell1Separation}, $|\mathcal{I}| \ge N_r/4$. Then the difference between $f$ and $f'$ on $B_r(p_i)$ is $2\phi_i/N_r^{1/p}$ if $i \in \mathcal{I}$, and 0 otherwise. By disjointness of the $B_r(p_i)$, we have
    \begin{align}
        \|f-f'\|_{1} &= \sum_{i \in \mathcal{I}} \frac{2}{N_r^{1/p}}\|\phi_i\|_{1} \ge \frac{r}{2N_r^{1/p}} \sum_{i \in \mathcal{I}}\frac{\vol_M(B_{r/2}(p_i))}{\vol_M(B_{r}(p_i))^{1/p}}\\
        & \ge \sum_{i \in \mathcal{I}} \frac{r\int_0^{r/2}s^{d-1} \vol_M(B_{r}(p_i))}{2N_r^{1/p} \int_0^{r}s^{d-1} \vol_M(B_{r}(p_i))^{1/p}} \qquad \text{where } s(u) = \sinh(u \sqrt{|K|}) \\
        & \ge \frac{r N_r^{1-1/p}\int_0^{r/2}s^{d-1}}{8 \int_0^{r}s^{d-1} } \inf_{i \in \mathcal{I}}\left[\vol_M(B_{r}(p_i))^{1-1/p}\right] \\
        & \ge \frac{r N_r^{1-1/p}\int_0^{r/2}s^{d-1}}{8 \int_0^{r}s^{d-1} } \inf_{i \in [N_r]}\left[\vol_M(B_{r}(p_i))^{1-1/p}\right]\label{eq:FrGL1LowerBd}
    \end{align}
    by the $L^1$-bound on $\phi_i$, Bishop--Gromov (\Cref{thm:BisGroSinh}), and using $|I| \ge N_r/4$ for the inequalities respectively. 
\end{proof}

\section{Proof of Theorem \ref{thm:mainResult2}}\label{app:mainProof2}
\begin{theorem}
    Let $(M,g)$ be a $d$-dimensional compact (separable) Riemannian manifold without boundary. For any $1 \le p,q \le +\infty$, the nonlinear width of $W^{k,p}(1)$ satisfies the lower bound for sufficiently large $n$:
    \begin{equation}
        \rho_n(W^{k,p}(1),L^q(M)) \ge C(d,g,p,q,k) (n + \log n)^{-k/d}.
    \end{equation}
    where the constant depends on the boundedness of the manifold geometry.
\end{theorem}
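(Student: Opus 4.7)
The plan is to follow the proof of \Cref{thm:mainResult} almost verbatim, replacing the first-order bumps of Step~1 with bumps whose derivatives up through order $k$ scale appropriately in $r$, and then tracking an extra factor of $r^{k-1}$ through the separation constant. The key technical addition is in Step~1: for each packing center $p_i$ with $r < \inj(M)$, I would construct a smooth bump $\phi'_i$ with $\supp(\phi'_i) \subset B_r(p_i)$, equal to $(r/4)^k$ on $B_{r/2}(p_i)$, and satisfying the pointwise bounds $|\nabla^j \phi'_i| \le C^*_j \, r^{k-j}$ for $j = 0, \ldots, k$, where the constants $C^*_j$ depend only on the bounded geometry of $M$. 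Such bumps can be produced by pulling a fixed Euclidean $C^k$ cutoff back through the exponential chart at $p_i$; the multiplicative constants relative to the Euclidean case come from bounded Christoffel symbols and bounded derivatives of the metric, which are finite and uniform in $i$ by compactness. Normalizing by $M_k \, \vol_M(B_r(p_i))^{1/p}$ with $M_k = \max_{j \le k} C^*_j$ then places $\phi_i \in W^{k,p}(1)$ for all sufficiently small $r$, and the $L^1$ lower bound on each $\phi_i$ becomes of order $r^k \vol_M(B_{r/2}(p_i))/\vol_M(B_r(p_i))^{1/p}$.

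Steps~2 and 3 then carry over with almost no change. The separation constant from \labelcref{eq:C1Def} and the clamping thresholds $\beta_i$ each pick up an extra factor of $r^{k-1}$ relative to their first-order counterparts, but the critical ratio $\beta/\alpha$ governing the metric-entropy upper bound in Step~3b has its $r^k$ factors cancel exactly as the $r$ factors did in \Cref{thm:mainResult}. Consequently the Bishop--Gromov volume comparison and packing-number bounds produce the identical contradiction condition $N_r > 16[n \log_2 C_4 + \log_2(e(n+1))]$, and the admissible choice of $r$ remains of order $(n+\log n)^{-1/d}$.

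Step~4 then concludes exactly as in \labelcref{eq:distbound}, except that the final expression now inherits $C_1^{(k)}(r)/4$ with its $r^k$ factor rather than an $r$ factor, giving
\begin{equation*}
    \rho_n(W^{k,p}(1), L^q(M)) \ge C(d,g,p,q,k) \, r^k \ge C'(d,g,p,q,k) (n+\log n)^{-k/d}.
\end{equation*}
The constant depends on $d$, $p$, $q$, $k$, $\vol(M)$, the Ricci lower bound $K$, $\inj(M)$, and the bump constants $C^*_j$, which is why the theorem allows a general dependence on $g$ rather than an explicit expression.

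The main obstacle is Step~1: producing $C^k$ bumps whose top-order covariant derivative is bounded by an $O(1)$ constant that is independent of $r$ and of the packing center. On Euclidean space this is immediate by rescaling a fixed cutoff, but on a curved manifold $\nabla^j$ in normal coordinates involves Christoffel terms that must be uniformly controlled across the packing. The result of \citet{azagra2007smooth} used for $k=1$ only yields a Lipschitz bound, so for $k \ge 2$ I would either iterate a mollification of the distance function in geodesic normal coordinates (using compactness to obtain uniform bounds on the metric and all its derivatives) or invoke a refined regularized-distance construction from the Riemannian-geometry literature. Once these bumps exist with the required scaling, the remaining arithmetic is a mechanical adaptation of the proof of \Cref{thm:mainResult}, with $r$ replaced by $r^k$ only in the final separation and distance estimates.
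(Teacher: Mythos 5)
Your proposal matches the paper's argument almost verbatim: the paper also replaces the Azagra--Ferrera cutoff with $\phi_i'(p) = r^k\,\chi(\|\log_{p_i}(p)\|/r)$ for a fixed one-dimensional bump $\chi$, invokes compactness to get bounded geometry and hence uniform constants $\lesssim_{d,g}$ in the bounds $\|\nabla^j \phi_i'\|_p \lesssim r^{k-j}\vol_M(B_r(p_i))^{1/p}$, normalizes by a constant $C(d,g)\vol_M(B_r(p_i))^{1/p}$, and then reruns Steps 2--4 unchanged with $r$ replaced by $r^k$ in $C_1(r)$ and $\beta_i$, the ratio $\beta/\alpha$ cancelling exactly as you describe. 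Your identification of the pulled-back Euclidean cutoff as the key construction and of uniform control over the packing as the one genuine technical point is precisely what the paper does, so this is the same proof.
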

\begin{proof}
    The main difference to the proof of \Cref{thm:mainResult} is in Step 1, the construction of the base function class. We label minute changes to the proof of \Cref{thm:mainResult} in \textcolor{red}{red}. Recall the definition of the bounded Sobolev class
\begin{equation}
    W^{k,p}(C) = \left\{u \in W^{k,p}(M) \mid \|u\|_{L^p},\|\nabla^i u\|_{L^p} \le C,\, 1 \le i \le k\right\}.
\end{equation}
Instead of using the infimal convolution based construction of \cite{azagra2007smooth} to get an explicit bound on the ratio between the basis function, we can use compactness of the manifold to show boundedness for an arbitrary bump function (without explicit constants). 

\textbf{Step 1. Defining the base function class}. Fix a radius $0< r < \inj(M)$, which will be chosen appropriately later. Fix a maximal packing of geodesic $r$-balls, say with centers $p_1,....,p_{N_r}$, where $N_r = N_r^{\text{pack}}(M)$ is the packing number. 

Let $\chi:\R \rightarrow [0,1]$ be a smooth bump function with support in $[-1,1]$ and satisfying $\chi(r) = 1$ for $r \in [-1/2,1/2]$. For each ball $B_r(p_i)$, construct a $\mathcal{C}^\infty$ function as follows:
\begin{equation}
    \phi_i'(p) = r^k \chi(\|\log_{p_i}(p)\|/r), \quad p = p_i,
\end{equation}
where $\log_{p_i}$ is the logarithmic map at $p_i$. By the compactness of $(M,g)$, the manifold has \emph{bounded geometry}, that is, all terms and derivatives of the curvature tensor are uniformly bounded. Using a uniform bound over $B_r(p_i)$, the covariant derivatives of $\phi_i'$ are all bounded as 
\begin{equation}
    \|\nabla^j \phi_i'\|_p \lesssim_{d,g} r^{k-j} \vol_M(B_r(p_i))^{1/p},\quad j=0,...,k.
\end{equation}
Moreover, 
\begin{equation}
    \|\nabla^j \phi_i'\|_1 \gtrsim_{d,g} r \vol_M(B_{r/2}(p_i)).
\end{equation}

% By definition, $B_r(p_i)$ are disjoint for $i=1,...,N_r$. From \Cref{prop:coveringNum}, the packing number satisfies the following where $D=\diam(M)$:
% \begin{equation}\label{eq:packingBounds}
%     \frac{\vol(M)}{\vol_{M^d_K}(B_{2r})} \le N_r^{\text{pack}} \le  \frac{\vol_{M^d_K}(B_D)}{\vol_{M^d_K}(B_{r})}.
% \end{equation}

Therefore, for $r<4$, and WLOG dividing by a sufficiently large constant $C = C(d,g)$ depending on the bounded geometry, we have that $\phi'_i \in W^{\textcolor{red}{k},p}(\vol_M(B_{r}(p_i))^{1/p})$. Defining
\begin{equation}
    \phi_i \coloneqq \frac{\phi_i'}{\textcolor{red}{C}\vol_M(B_{r}(p_i))^{1/p}},
\end{equation}
we get a non-negative function $\phi_i$ with support in $B_r(p_i)$ satisfying:
\begin{equation}\label{eq:phiL1LowerBdd2}
    \|\phi_i\|_{1} \ge \textcolor{red}{r^k} \frac{\vol_M(B_{r/2}(p_i))}{\textcolor{red}{C}\vol_M(B_{r}(p_i))^{1/p}},\quad \phi_i \in W^{1,p}(1).
\end{equation}
Moreover, $\phi_i = \textcolor{red}{r^k}/(\textcolor{red}{C}\vol_M(B_r(p_i))^{1/p})$ on $B_{r/2}(p_i)$. We now consider the function class
\begin{equation}
    F_r = \left\{ f_a = \frac{1}{N_r^{1/p}}\sum_{i=1}^{N_r} a_i \phi_i \,\middle\vert\, a_i \in \{\pm 1\},\, i=1,...,N_r\right\}.
\end{equation}
Since the sum is over functions of disjoint support, we have that $\|f_a\|_p, \|\nabla f_a\|_p \le 1$, and thus each element of $F_r$ also lies in $W^{\textcolor{red}{k},p}(1)$. Moreover, every element $f_a \in F_r$ satisfies the $L^1$ lower bound using \labelcref{eq:phiL1LowerBdd2}:
\begin{equation}
    \|f_a\|_{1} \ge \frac{\textcolor{red}{r^k}}{\textcolor{red}{C}N_r^{1/p}} \sum_{i=1}^{N_r} \frac{\vol_M(B_{r/2}(p_i))}{\vol_M(B_{r}(p_i))^{1/p}},\quad \forall f_a \in F_r.
\end{equation}
\noindent\textbf{Step 2. $L^1$-well-separation of $F_r$.}
\Cref{claim:FrSep} can be replaced with the following constant with a very similar proof, using the $L^1$ lower bound of the $\phi_i$.
\begin{claim}
    There exists a constant $C_1(r)>0$ such that for any $f \ne f' \in F_r(G)$, we have
    \begin{equation}
        \|f-f'\|_{1} \ge C_1(r) > 0.
    \end{equation}
    Moreover, the following constant works:
    \begin{equation}\label{eq:C1Def2}
        C_1(r) = \frac{\textcolor{red}{r^k} N_r^{1-1/p}\int_0^{r/2}s^{d-1}}{\textcolor{red}{2C} \int_0^{r}s^{d-1} } \inf_{i \in [N_r]}\left[\vol_M(B_{r}(p_i))^{1-1/p}\right].
    \end{equation}
\end{claim}

\noindent\textbf{Step 3a. Construction of well-separated bounded set.} Let $\gH^n$ be a given set of $\vol_M$-measurable functions with $\dim_p(\gH^n) \le n$. Let $\varepsilon>0$. Denote
\begin{equation}
    \delta = \sup_{f \in F_r(G)} \inf_{h \in \gH^n} \|f-h\|_{1} + \varepsilon = \dist(F_r(G), \gH^n, L^1(M)) + \varepsilon.
\end{equation}

Define a projection operator $P : F_r(G) \rightarrow \gH^n$, mapping any $f \in F_r(G)$ to any element $Pf$ in $\gH^n$ such that
\begin{equation}
    \|f - Pf\|_{1} \le \delta.
\end{equation}
We introduce a (measurable) clamping operator $\mathcal{C}$ for a function $f$:
\begin{gather}
    \beta_i = \textcolor{red}{r^k}/(\textcolor{red}{C} \vol_M(B_r(p_i))^{1/p} N_r^{1/p}),\quad i=1,...,N_r,\\
    (\gC f)(x) = \begin{cases}
        -\beta_i, & x \in B_r(p_i) \text{ and } f(x) < -\beta_i;\\
        f(x), & x \in B_r(p_i) \text{ and } -\beta_i \le f(x) \le \beta_i;\\
        \beta_i, & x \in B_r(p_i) \text{ and } f(x) > \beta_i;\\
        0, & \text{otherwise}. \\
    \end{cases}
\end{gather}
Note that $\beta_i$ are the bounds of $f_a \in F_r$ in the balls $B_r(p_i)$. As before, we have separation
\begin{align}\label{eq:SeparatedS2}
    \|\gC P f - \gC P f'\|_{1} \ge \|f-f'\|_{1} -2\delta
    \ge C_1(r) - 2\delta. %\frac{r N_r^{1-1/p}\int_0^{r/2}s^{d-1}}{8 \int_0^{r}s^{d-1} } \inf_{i \in [N_r]}\left[\vol_M(B_{r}(p_i))^{1-1/p}\right] - 2\delta.
\end{align}
% For ease of notation, define the $L^1$-separation distance to be $C_1(r)$:

\noindent\textbf{Step 3b. Minimum distance by contradiction.} Suppose for contradiction that $\delta \le C_1(r)/4$. Then from \labelcref{eq:SeparatedS2}, we have
\begin{equation}
    \|\gC P f - \gC P f'\|_{1} \ge C_1(r)/2.
\end{equation}
In particular, the separation implies that the $\gC P f$ are distinct for distinct $f \in F_r(G)$, thus $|\mathcal{S}| = |G| \ge 2^{N_r/16}$. 

Define $\alpha = C_1(r)/2$. Consider the metric entropy in $L^1$, as given in Lemma \ref{lem:PackingL1}. By construction \labelcref{eq:SeparatedS}, $\mathcal{S}$ itself is an $\alpha$-separated subset in $L^1$ as any two elements are $L^1$-separated by $\alpha$, so
\begin{equation}\label{eq:EntropyLowerBd2}
    \gM_\alpha(\mathcal{S}, L^1(\vol_M)) \ge 2^{N_r/16}.
\end{equation}

We now wish to obtain an upper bound on $\gM_\alpha(\gS, L^1)$ using Lemma \ref{lem:PackingL1}. From the definition of pseudo-dimension, we have $\dim_p(\gC PF_r(G)) \le \dim_p(PF_r(G))$, since any P-shattering set for $\gC PF_r(G)$ will certainly P-shatter $PF_r(G)$. Since $PF_r(G) \subset \gH^n$, we have $\dim_p(PF_r(G)) \le \dim_p(\gH^n) \le n$. Thus $\dim_p(\mathcal{S}) = \dim_p(\gC PF_r(G)) \le n$. $\gS$ is $L^1$-separated with distance at least $\alpha$, and moreover consists of elements that are bounded by $\beta \coloneqq \sup_i \beta_i$. Lemma \ref{lem:PackingL1} now gives:
\begin{equation}\label{eq:EntropyUpperBd2}
    M_\alpha(\mathcal{S}, L^1(\vol_M)) \le e(n+1) \left(\frac{4e\beta\vol(M)}{\alpha}\right)^n.
\end{equation}
Intuitively, $N_r \sim r^{-d}$, so the lower bound \labelcref{eq:EntropyLowerBd2} is exponential in $r$. Meanwhile, $\alpha$ and $\beta$ are both polynomial in $r$, so the upper bound \labelcref{eq:EntropyUpperBd2} is polynomial in $r$. So for sufficiently small $r$, we have a contradiction with the supposition that $\delta \le C_1(r)/4$. We now show this formally. Recall:
\begin{equation}
    \beta = \sup_{i \in [N_r]} \frac{\textcolor{red}{r^k}}{\textcolor{red}{C} \vol_M(B_r(p_i))^{1/p} N_r^{1/p}},\quad \alpha = \frac{\textcolor{red}{r^k} N_r^{1-1/p}\int_0^{r/2}s^{d-1}}{\textcolor{red}{4C} \int_0^{r}s^{d-1} } \inf_{i \in [N_r]}\left[\vol_M(B_{r}(p_i))^{1-1/p}\right].
\end{equation}

Note that the supremum in $\beta$ and the infimum in $\alpha$ is attained by the same $i \in [N_r]$, namely, the $p_i$ that has smallest $\vol_M(B_r(p_i))$. Combining \labelcref{eq:EntropyLowerBd2} and \labelcref{eq:EntropyUpperBd2}, where $s(u) = \sinh(u \sqrt{|K|})$,
\begin{align}
    2^{N_r/16} &\le e(n+1) \left(\frac{4 e\beta \vol(M)}{\alpha}\right)^n \notag\\
    & = e(n+1) \left(\frac{4e\vol(M) \sup_{i \in [N_r]} [\textcolor{red}{r^k}/(\textcolor{red}{C} \vol_M(B_r(p_i))^{1/p} N_r^{1/p})] }{\frac{\textcolor{red}{r^k} N_r^{1-1/p}\int_0^{r/2}s^{d-1}}{\textcolor{red}{4C} \int_0^{r}s^{d-1} } \inf_{i \in [N_r]}\left[\vol_M(B_{r}(p_i))^{1-1/p}\right]}\right)^n \notag\\
    &= e(n+1) \left(16 e \frac{\vol(M)\int_0^{r} s^{d-1} }{N_r\int_0^{r/2} s^{d-1}}\sup_{i,j \in [N_r]} \frac{\vol_M(B_{r}(p_j))^{1/p-1}}{\vol_M(B_r(p_i))^{1/p}}\right)^n \notag \\
    &= e(n+1) \left(16 e \frac{\vol(M)\int_0^{r} s^{d-1} }{N_r\int_0^{r/2} s^{d-1}}\sup_{i \in [N_r]} \left[\vol_M(B_{r}(p_i))^{-1}\right] \right)^n \label{eq:EntropyCombinedBd2}
\end{align}
Note we have the same expression in \labelcref{eq:EntropyCombinedBd2} as \labelcref{eq:EntropyCombinedBd}in the proof for $W^{1,p}(1)$. As before, we get

\begin{align}
    2^{N_r/16} \le e(n+1) \left(2^{2d+4} e C_3 C_2^{-1} \right)^n =  e(n+1) C_4(d)^n.
\end{align}
% where
% \begin{equation}
% \begin{split}
%     C_4 &= C_4(d) \coloneqq 2^{2d+4} e C_3 C_2^{-1} = 2^{2d+4} e d^{d-1} \left[\frac{\vol_{M_1^d}(B_1)}{\vol_{M_1^{d-1}}(B_1)}\right]^d.
% \end{split}
% \end{equation}
We get a contradiction if 
\begin{equation}\label{eq:NrContradictionStep2}
    N_r > 16 \left[n \log_2 C_4 + \log_2\left(e(n+1)\right)\right].
\end{equation}
Recalling the lower bound \labelcref{eq:packingBounds} on $N_r$ and using \labelcref{eq:MdKUpperBd},
\begin{align}\label{eq:NrExplicitLowerBound2}
    N_r &\ge \frac{\vol(M)}{\vol_{M^d_K}(B_{2r})} > \frac{\vol(M)}{C_3 (2r)^d}.
\end{align}
Take the following choice of $r$:
\begin{equation}\label{eq:ChoiceR2}
    r = \min\left\{\frac{1}{2}\left(16\frac{C_3}{\vol(M)} \left[n \log_2 C_4 + \log_2(e(n+1))\right]\right)^{-1/d},\, \frac{1}{\sqrt{|K|}},\, \frac{\inj(M)}{2},\, 4 \right\}.
\end{equation}
Using \labelcref{eq:NrExplicitLowerBound2}, this choice of $r$ satisfies the contradiction condition \labelcref{eq:NrContradictionStep2}. Note $r \sim (n + \log n)^{-1/d}$. The constants $C_3, C_4$ depend only on $d$.

\textbf{Step 4. Concluding contradiction.} This choice of $r$ contradicts the assumption that $\delta \le C_1(r)/4$. Therefore, we must have that $\delta > C_1/4$. Since the choice of $r$ is independent of the choice of $\varepsilon>0$ taken at the start of Step 3a, we have that 
\begin{equation}\label{eq:FrGLowerBdC12}
    \dist(F_r(G), \gH^n, L^1(\vol_M)) \ge C_1(r)/4,
\end{equation}
where $r$ is chosen as in \labelcref{eq:ChoiceR2}. We obtain the chain of inequalities
\begin{align}
    \dist(W^{1,p}(1), \gH^n, L^q) &\ge \dist(W^{1,p}(1), \gH^n, L^1) \vol(M)^{1/q-1} \notag\\
    &\ge \dist(F_r(G), \gH^n, L^1)\vol(M)^{1/q-1} \notag\\
    &\ge C_1(r) \vol(M)^{1/q-1}/4 \notag\\
    &= \frac{\textcolor{red}{r^k} N_r^{1-1/p}\int_0^{r/2}s^{d-1}}{\textcolor{red}{8C} \int_0^{r}s^{d-1} } \inf_{i \in [N_r]}\left[\vol_M(B_{r}(p_i))^{1-1/p}\right] \vol(M)^{1/q-1}, 
    % &= \vol(M)^{1/q-1} \Omega(2^{-d} r^{1-d/p+d}) 
\end{align}
where the first inequality comes from H\"older's inequality $\|u\|_1 \le \|u\|_q \vol(M)^{1-1/q}$, the second inequality from $F_r(G) \subset W^{1,p}(1)$, the third from \labelcref{eq:FrGLowerBdC12} and the equality from definition \labelcref{eq:C1Def} of $C_1(r)$. We conclude with recalling the bounds \labelcref{eq:MdKUpperBd}, \labelcref{eq:NrExplicitLowerBound}, and \Cref{prop:smallBall}. We have
\begin{align*}
    \dist(W^{1,p}(1), \gH^n, L^q) &\ge \textcolor{red}{\frac{r^k}{8C}} \underbrace{\left(\frac{\vol(M)}{C_3 (2r)^d}\right)^{1-1/p}}_{\text{\labelcref{eq:NrExplicitLowerBound}}} \underbrace{2^{-d}}_{\text{\labelcref{eq:MdKUpperBd}}} \underbrace{\left[C_2 r^d\right]^{1-1/p}}_{\text{Prop. \ref{prop:smallBall}}} \vol(M)^{1/q-1} \\
    &= C_5(d, \vol(M),p,q\textcolor{red}{,g}) r^k.
\end{align*}

The constant is 
\begin{equation}
    C_5 = 2^{-d-3} \frac{\vol(M)^{1/q-1/p}}{2^{d-d/p}\textcolor{red}{C}} (C_2 C_3^{-1})^{1-1/p}.
\end{equation}
Moreover, the constant $C_5$ and choice of $r$ are independent of $\gH^n$. Taking infimum over all choice of $\gH^n$ with $\dim_p(\gH^n) \le n$ and using \labelcref{eq:ChoiceR2}, we have
\begin{equation}
    \rho_n(W^{1,p}(1), L^q) \ge C_5(d, \vol(M),p,q\textcolor{red}{,g}) \textcolor{red}{r^k} \gtrsim (n + \log n)^{-\textcolor{red}{k}/d}.
\end{equation}%

\end{proof}
\end{document}